%% file: main.tex
\documentclass{article}
\usepackage{iclr2026_conference,times}

\input{math_commands.tex}

\usepackage{xcolor}
\usepackage[colorlinks=true, urlcolor=linkblue, citecolor=black, linkcolor=black]{hyperref}
\definecolor{linkblue}{RGB}{4,19,110}
\usepackage{url}
\usepackage{graphicx}

\usepackage{wrapfig}
\usepackage{amssymb}

\usepackage[export]{adjustbox}
\usepackage{amsthm}

\theoremstyle{plain}
\newtheorem{theorem}{Theorem}[section]
\newtheorem{proposition}[theorem]{Proposition}

\theoremstyle{definition}
\newtheorem{definition}[theorem]{Definition}

\theoremstyle{remark}
\newtheorem{remark}[theorem]{Remark}

\usepackage{placeins}
\usepackage{etoolbox}

\newif\ifinappendix
\preto\appendix{\inappendixtrue}

\pretocmd{\section}{\ifinappendix\FloatBarrier\fi}{}{}
\pretocmd{\subsection}{\ifinappendix\FloatBarrier\fi}{}{}
\pretocmd{\subsubsection}{\ifinappendix\FloatBarrier\fi}{}{}

\usepackage{multicol}
\usepackage{array,booktabs}
\usepackage{multirow}
\usepackage{xspace}
\usepackage{subcaption}
\usepackage{booktabs}

\newcommand{\us}{NTP-useless\xspace}
\newcommand{\uf}{NTP-useful\xspace}

\DeclareMathOperator*{\Expect}{\mathbb{E}}

\title{Understanding the Emergence of Seemingly Useless Features in Next-Token Predictors}

\author{
  \quad  \quad \quad \quad \quad \quad \quad \quad \quad \; \textbf{Mark Rofin}\textsuperscript{1}\footnotemark[1]\;,
  \textbf{Jalal Naghiyev\textsuperscript{2,3}}, \textbf{Michael Hahn\textsuperscript{4}} \\
  \quad \quad \quad  \quad \quad \quad \quad \;\;\textsuperscript{1} School of Computer and Communication Sciences, EPFL \\
  \quad \quad \quad  \quad \quad \quad \quad \quad \textsuperscript{2} Technical University of Munich, \textsuperscript{3} Zuse School ELIZA\\
\quad \quad \quad   \quad \quad \quad \quad \quad \quad \textsuperscript{4} Saarland University Campus, Saarland University
}

\iclrfinalcopy
\begin{document}

\maketitle
\footnotetext[1]{Correspondence to: \texttt{mark.rofin@epfl.ch}}

\begin{abstract}
    Trained Transformers have been shown to compute abstract features that appear redundant for predicting the immediate next token. We identify which components of the gradient signal from the next-token prediction objective give rise to this phenomenon, and we propose a method to estimate the influence of those components on the emergence of specific features. After validating our approach on toy tasks, we use it to interpret the origins of the world model in OthelloGPT and syntactic features in a small language model.
    Finally, we apply our framework to a pretrained LLM, showing that features with extremely high or low influence on future tokens tend to be related to formal reasoning domains such as code.
    Overall, our work takes a step toward understanding hidden features of Transformers through the lens of their development during training.
\end{abstract}

\vspace{-6pt}
\quad\quad\quad\quad\quad\quad\textbf{Website: } \url{https://markfryazino.github.io/useless-features-iclr}
\vspace{-2pt}

\quad\quad\quad\quad\quad\quad\textbf{Code: } \url{https://github.com/Markfryazino/useless-features-iclr-code}

\section{Introduction}

Large Language Models (LLMs) are usually pretrained with the objective of next-token prediction (NTP). In this paradigm, a model learns to predict each token in a sequence given all previous tokens: in other words, it learns the distribution $p(x_{t+1} \mid x_{1}\cdots x_{t})$.

Thus, the model is incentivized to compute features that help predict the immediate next token. Hence, one could reasonably expect that the hidden representations at position $t$, computed by a model trained in this way, would contain only the information relevant for predicting $x_{t+1}$. On certain synthetic tasks, this was found to be true, highlighting a downside of NTP as a training objective \citep{pmlr-v235-bachmann24a, thankaraj2025looking}.

However, a growing body of work on LLMs (and NTP-trained Transformers in general) shows that sometimes they learn much more than that. For example, Transformers reconstruct abstract features of the input text \citep{templeton2024scaling, park2023linear}, infer the high-level structure of the processes generating their training data, forming `world models' \citep{li2022emergent, karvonen2024emergent, shai2024transformers, jin2024emergent, gurnee2023language}, or implicitly predict the sequence multiple tokens ahead \citep{pal-etal-2023-future, jenner2024evidence}. Motivated by these intriguing findings, we ask:
\begin{center}
    \emph{How do Transformers trained for NTP learn features that don't help in the prediction of the immediate next token?}
\end{center}

The prior work investigating learned features in Transformers mostly employed a teleological perspective: that is, features are viewed in the context of their role in the algorithms implemented by a trained model (e.g., \cite{ameisen2025circuit, arditi2024refusal}). This approach is useful to find the circuits encoded in LLMs, but it doesn't tell us much about the gradient signal that causes those circuits to develop during training. Thus, the ways of how training for NTP drives the emergence of features has been largely underexplored so far.

Towards closing this gap, we develop a novel view on features learned by Transformers. Based on the structure of information flow in causally masked Transformers, we show that features can in principle be learned by three distinctive mechanisms, which we refer to as \emph{direct learning}, \emph{pre-caching}, and \emph{circuit sharing}. The two latter ones allow the token distribution at positions $> i + 1$ to influence the model's representations at position $i$, unlocking the learning of ``useless'' features.
Next, for a given feature, we propose an experimental method to classify it depending on which mechanism contributed the most to its development.
We then use our framework to understand the learned features in Transformers trained on different data domains, including toy functions, the board game of Othello, and language.

\textbf{Our key contributions include:}
(i) A theoretically grounded explanation for why Transformers trained for NTP learn complex features that are not immediately helpful;
(ii) An approach for tracing the gradient components of the NTP objective that led to the development of a given feature in a model;
(iii) Novel findings obtained using the proposed framework, including an explanation of the OthelloGPT world model fragility, inspection of the role of pre-caching in text generation, and interpretation of the pre-cached features in an LLM.

\section{Setup}

We use $\mathcal{X}$ to denote a variable-length input space of discrete token sequences $x_1\dots x_n$. We view a model $T_\theta$ as representing a function $x_1\dots x_n \to \hat{x}_2\dots \hat{x}_{n+1}$ such that

\begin{align*}
    \hat{x}_{i+1}(x)  =  h_\theta^{L+1}(r^L_i) \ \ \ \ \ \ \ \  \ r^0_{\theta, i}(x)  = h_\theta^0(x_i) \\
    r^k_{\theta, i}(x)  = h_\theta^k(r^{k-1}_1\dots r^{k-1}_i), \; k > 1\ \ \ \ \ \ \ \ \ \ \
\end{align*}
and $r_{\theta,i}^k(x) \in \mathbb{R}^d$. Here $h_\theta^0$ and $h_\theta^{L+1}$ are embedding and unembedding layers, respectively, $r_{\theta,i}^k(x)$ are the values of the residual stream, and $h_\theta^k$ are Transformer blocks.

We call a \emph{learned feature} any linear component of the residual stream at a specific layer and position $\langle w_i^k, r_{\theta,i}^k(x) \rangle$, where a vector $w_i^k \in \mathbb{R}^d$ defines the \emph{feature direction}.
We informally call a feature of a sequence $x_{<t}$ \emph{\us} if there exists an optimal next-token predictor for $x_t$ that doesn't compute it. Otherwise, we call that feature \emph{\uf}. For example, \us features can include the positioning of the board game pieces that don't affect the set of possible next moves,
or the surface properties of the sequence that are irrelevant to its continuation.
The central question of our work can then be formulated as understanding how \us features emerge in Transformers.

\section{Information Flow Decomposition}

\subsection{Gradient Decomposition}

We fix position $i$ and layer $k$ and study all information paths in the computational graph of the model, classifying them by how they relate to $r_{\theta,i}^k(x)$. We argue that the gradient training signal can flow to $\theta$ through three types of paths, illustrated in Figure \ref{fig:illustration}.

Firstly, a gradient signal can come from the immediate next-token prediction (\emph{direct learning}). This includes all paths passing through $r_{\theta,i}^k(x)$ and $\hat{x}_{i+1}$ and represents the effect of the information encoded in $r_{\theta,i}^k(x)$ on the prediction of the immediate next token. These paths are colored
\textcolor[HTML]{008a0e}{green} in Figure \ref{fig:illustration}. We formalize this by comparing the overall gradient with a gradient after a stop-gradient operator is applied:
\begin{equation}
    \nabla_\theta L^k_{i\;\mathrm{(direct)}}  = \nabla_\theta L_i - \nabla_\theta L_i^{\mathrm{sg}(k,i)}.
\end{equation}

Secondly, the information encoded in $r_{\theta,i}^k(x)$ affects the loss at positions $j > i$ because attention heads at those positions can attend to position $i$. Thus, a gradient signal can come from prediction loss at future positions, after passing through one or more attention operations (\emph{pre-caching}). This component includes the paths passing through $r_{\theta,i}^k$ and $\hat{x}_j$ for $j > i + 1$ (\textcolor[HTML]{1071e5}{blue} in Figure \ref{fig:illustration}):
\begin{equation}
    \nabla_\theta L^k_{i\;\mathrm{(pre\text{-}cached)}} = \nabla_\theta \sum_{j \ne i} \left[ L_j - L_j^{\mathrm{sg}(k,i)} \right].
\end{equation}
Third, there are paths that do not pass through $r_{\theta,i}^k(x)$ at all. Hence, some gradient signal arises independently of $r_{\theta,i}^k(x)$. Since Transformer blocks use the same parameters to perform the computation at every position, this signal may influence the parameters computing it. We call this phenomenon \emph{circuit sharing} and visualize the related paths in \textcolor[HTML]{d27926}{orange} in Figure \ref{fig:illustration}:
\begin{equation}
    \nabla_\theta L^k_{i\;\mathrm{(shared)}} = \sum_{j} \nabla_\theta L_j^{\mathrm{sg}(k,i)}.
\end{equation}

The term \emph{pre-caching} is borrowed from \cite{wu2024language}, who studied it as a possible explanation for look-ahead in LLMs. We discuss the relation of our work to \cite{wu2024language} in Section \ref{sec:related_work}.

\begin{figure}[t]
\vskip -0.2in
  \centering
  \includegraphics[valign=c,width=.55\textwidth]{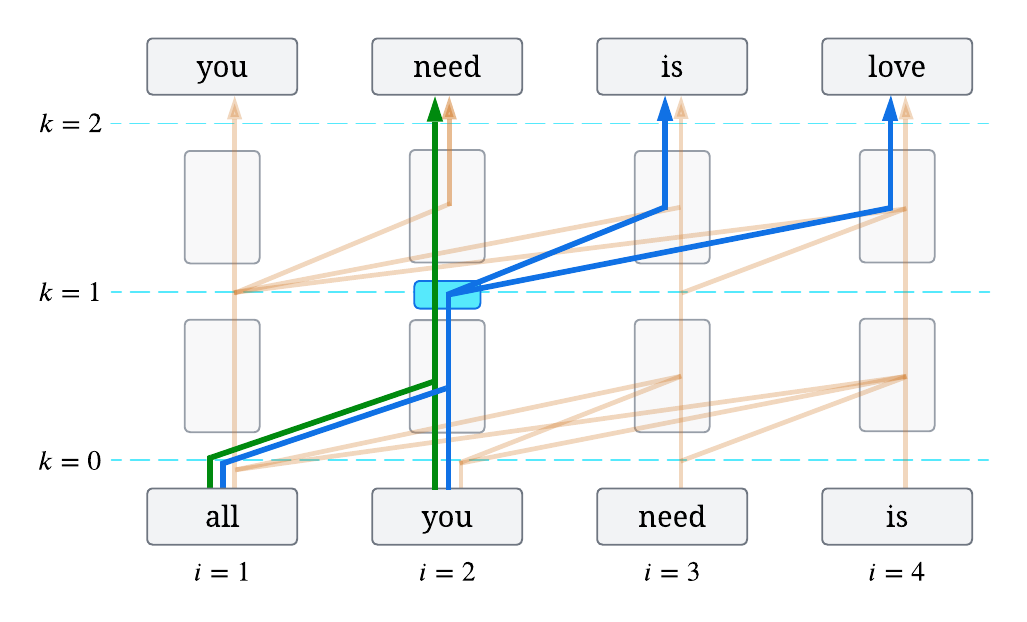}\hfill
  \includegraphics[valign=c,width=.4\textwidth]{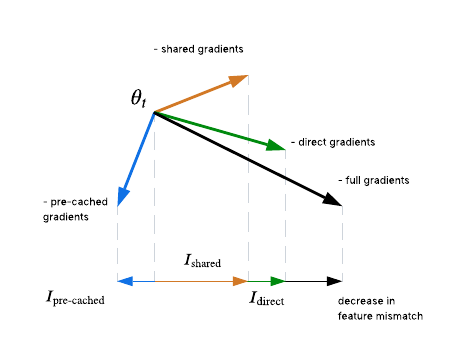}\hfill
    \caption{\textbf{Left:} An illustration of the information-flow decomposition for $i = 2$ and $k = 1$ into \textbf{\textcolor[HTML]{008a0e}{direct}}, \textbf{\textcolor[HTML]{1071e5}{pre-cached}}, and \textbf{\textcolor[HTML]{d27926}{shared}} components. \textbf{\textcolor[HTML]{008a0e}{Direct}} and \textbf{\textcolor[HTML]{1071e5}{pre-cached}} paths must pass through the residual stream at position $i = 2$ and layer $k = 1$; any other path is considered \textbf{\textcolor[HTML]{d27926}{shared}}. The \textcolor[HTML]{56e9fc}{turquoise rectangle} indicates $r_{\theta, i = 2}^{k=1}$. \textbf{Right:}
    by decomposing the loss gradients, in Section \ref{sec:attribution} we partition the improvements in feature linearity at each training step.
    }
\label{fig:illustration}
  \vskip -0.15in
\end{figure}

These three components provide an \emph{exhaustive decomposition} of the gradient signal:

\begin{proposition}[Loss gradients decomposition]
\label{thm:loss_decomposition}
    For any layer $k$ and position $i$,
    \begin{gather*}
        \label{loss-decomposition}
        \nabla_\theta L = \nabla_\theta L^k_{i\;\mathrm{(direct)}} + \nabla_\theta L^k_{i\;\mathrm{(pre\text{-}cached)}} + \nabla_\theta L^k_{i\;\mathrm{(shared)}}.
    \end{gather*}
\end{proposition}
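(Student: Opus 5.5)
This is a telescoping identity, so the plan is to add the three definitions and watch the stop-gradient terms cancel. Throughout, $L=\sum_j L_j$ is the total loss over positions. $L_j^{\mathrm{sg}(k,i)}$ is the same scalar loss computed with a stop-gradient applied to $r^k_{\theta,i}(x)$. Its value therefore equals $L_j$, and only its gradient differs: it collects exactly the contributions of computational-graph paths from $\theta$ to $L_j$ that avoid $r^k_{\theta,i}$. I will state this reading explicitly, since the decomposition only makes sense with it. All gradients are then well-defined linear objects, and linearity of $\nabla_\theta$ over the sum $\sum_j$ is the only analytic fact needed.

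First I separate the term $j=i$ from the rest in the shared component:
\[
\nabla_\theta L^k_{i\,(\mathrm{shared})}=\nabla_\theta L_i^{\mathrm{sg}(k,i)}+\sum_{j\neq i}\nabla_\theta L_j^{\mathrm{sg}(k,i)}.
\]
Next I expand the pre-cached component by linearity:
\[
\nabla_\theta L^k_{i\,(\mathrm{pre\text{-}cached})}=\sum_{j\ne i}\nabla_\theta L_j-\sum_{j\ne i}\nabla_\theta L_j^{\mathrm{sg}(k,i)}.
\]
Adding these two expressions to $\nabla_\theta L^k_{i\,(\mathrm{direct})}=\nabla_\theta L_i-\nabla_\theta L_i^{\mathrm{sg}(k,i)}$, the sums $\sum_{j\ne i}\nabla_\theta L_j^{\mathrm{sg}(k,i)}$ cancel, and so does $\nabla_\theta L_i^{\mathrm{sg}(k,i)}$. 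What remains is $\nabla_\theta L_i+\sum_{j\ne i}\nabla_\theta L_j=\nabla_\theta\sum_j L_j=\nabla_\theta L$, which proves the proposition.

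The only point needing care is indexing. Here $L_i$ is the loss for predicting $x_{i+1}$ from position $i$. By causality, $L_j^{\mathrm{sg}(k,i)}$ has the same gradient as $L_j$ for $j<i$, so those terms contribute zero to the pre-cached part. The formula is therefore consistent with the informal description as paths reaching later positions, even though the sum is written over all $j\neq i$. This remark is not needed for the identity itself, which is purely algebraic.

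I expect no real obstacle. The only subtlety is justifying that the stop-gradient losses have well-defined gradients within standard automatic differentiation, and I will simply take this as the definition.
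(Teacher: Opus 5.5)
Your proposal is correct and takes the same approach as the paper's proof. Both arguments add and subtract $\sum_j \nabla_\theta L_j^{\mathrm{sg}(k,i)}$, then regroup the $j=i$ difference as the direct component and the $j\neq i$ differences as the pre-cached component. Your side remark, that the $j<i$ terms vanish under causal masking, is also correct, though the identity does not need it.
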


By Proposition \ref{thm:loss_decomposition}, for each $i$ and $k$, the total gradients that are backpropagated to the model parameters after computing the loss on one training batch can be split into three terms distinctive in their nature: direct, pre-cached, and shared components.

\textbf{How to study pre-caching and circuit sharing?} In light of  Proposition \ref{thm:loss_decomposition}, a natural question arises: how important each of the introduced components is for learning the task, as well as for representing the latent features picked up by the model. The role of the direct component is in some sense trivial: that is the main source of the gradient signal for predicting the immediate next token, thus we concentrate on analyzing pre-caching and circuit sharing.
We approach the analysis by two complementary ways: \emph{intervention} (training a model with one of the components ablated, Section \ref{sec:ablation}) and \emph{attribution} (quantifying the influence of each component in a training run, Section \ref{sec:attribution}).

\subsection{Ablating Pre-Caching and Circuit Sharing}
\label{sec:ablation}

\textbf{Ablating pre-caching.} \cite{wu2024language} proposed \textit{myopic training} -- a way to train an LLM that prevents pre-caching. The only difference between myopic and normal training is that all gradients between the loss at the $i$-th position and the activations at the $j$-th position (where $j \ne i$) are blocked. Since the only path for information to flow between the $i$-th and $j$-th tokens is through attention, it is sufficient to stop gradients after computing the $K$ and $V$ matrices (excluding those for the current token). The purpose of myopic training is to block gradients so that the $i$-th token is not incentivized to compute any feature that is \us for predicting $x_{i+1}$ but useful for being picked up by an attention head later. This way, pre-cached features do not appear.

\textbf{Ablating circuit sharing.} To stop learning features shared across positions, we use a technique that we call \emph{$m$-untied training}. We select an index $m$ and use one set of parameters ($\theta_{\leqslant m}$) for the forward pass on all positions before $m$ and another ($\theta_{>m}$) for positions after $m$. This way, we have two models that are trained only to predict their respective part of the input, but the second model still attends to the KV-cache of the first one. In the extreme $m$-untied + myopic case, the pre-cached gradients do not flow through attention, so the parameters $\theta_{\leqslant m}$ do not depend on the data after the $m$-th token. In this case, our double model resembles a patient with non-communicating left and right brain hemispheres, as in the split brain experiments in neuroscience \citep{gazzaniga2005forty}.

\subsection{The Roles of Pre-Caching and Circuit Sharing}
\label{sec:component-roles}

\textbf{Pre-caching increases expressivity.} The power of Transformers comes from complex interactions of attention heads that move information between different tokens.
Even constructions as simple as an induction head require at least two layers of attention interacting with each other. Disabling pre-caching prevents the Transformer from deliberately learning such constructions. Indeed, if a feature is NTP-useless at all positions, there is no hope for it to be learned and used by later layers. Thus, many constructions requiring more than one layer of attention (such as the ones described by \cite{liu2022transformers}) will be impossible to learn.

Note that circuit sharing, by contrast, does not have this property, since even with untied training the gradient signal between tokens still passes through. Compared to myopic training, where an optimal solution cannot be reached due to the lack of training signal, SGD in the case of untied training has the signal needed to find the minimum.

\textbf{Circuit sharing enables cross-position feature transfer.} While the strength of pre-caching is increased expressivity, circuit sharing has a different unique property: enabling feature transfer across positions.
Imagine a feature that is \us at position $i$ but \uf at position $j$. Due to its usefulness at $j$, it will be learned, and thanks to circuit sharing, it will also be encoded at $i$. In this way, position $i$ will have access to knowledge learned at a different position.

\subsection{Estimating the Effect of Gradient Components on Feature Emergence}
\label{sec:attribution}

So far, we have argued that there are three path types along which the loss signal, via its gradient, can pass to the model parameters. We now use this decomposition to study the extent to which a feature is produced, over the course of training, by each of the three components of the gradient. To this end, we aim to quantify how much each component of the gradient signal pushes the parameters towards developing a feature.

\begin{definition}
    \label{def:feature_mismatch}
    We call \emph{a feature mismatch} the value
    \begin{equation*}
    \label{eq:r_defin}
    R(x \mid \theta_1, \theta_2, w_i^k) = \frac{1}{2} \left(
    \langle w_i^k, r_{\theta_1,i}^k(x) \rangle
    - \langle w_i^k, r_{\theta_2,i}^k(x) \rangle \right)^2
    \end{equation*}
\end{definition}

The feature mismatch quantifies how much the projections of the residual streams onto the feature $w_i^k$ differ between models parameterized by $\theta_1$ and $\theta_2$.

We now want to quantify the extent to which a single gradient update to an intermediate checkpoint $\theta_t$ narrows the feature mismatch when compared to the final checkpoint $\theta^*$. By separately considering the three components of the gradient signal, we will be able to understand what role each plays in the development of the feature. We formalize this in terms of an \emph{influence} $I(\theta, x, y \mid w_i^k, \theta^*, G)$:
\begin{definition}
    \label{def:i}
    For a vector $G \in \mathbb{R}^{|\theta|}$, we call \emph{the influence of $G$} the value
    \begin{equation*}
        \label{eq:i_defin}
        I_i^k(\theta, x \mid w_i^k, \theta^*, G) = \frac{d}{d \varepsilon} R \left( x \mid \theta + \varepsilon G, \theta^*, w_i^k \right) \bigg|_{\varepsilon=0}.
    \end{equation*}
\end{definition}

Applying the decomposition from Proposition \ref{thm:loss_decomposition}, for each feature we define \emph{direct influence}:
\[
I_\mathrm{direct}(w_i^k, \theta) = I(\theta, x \mid w_i^k, \theta^*, \nabla_\theta L^k_{i\;\mathrm{(direct)}}).
\]
The definitions of \emph{pre-cached} and \emph{shared} influences are analogous.

\begin{remark}[informal]
\label{remark:sgd}
Consider a model $T_\theta$, trained for $M$ steps of SGD with a small step size $\eta$. Then
\begin{equation*}
\label{eq:sgd}
R(x \mid \theta_0, \theta^*, w) \approx \eta \cdot \sum_{s\in S} \sum_{t=1}^M I(\theta_t, x_t \mid w_i^k, \theta^*, \nabla_\theta L_s),
\end{equation*}
where $S = \{ \mathrm{direct}, \mathrm{pre\text{-}cached}, \mathrm{shared} \}$.
\end{remark}

Remark \ref{remark:sgd} holds approximately due to the first-order approximation of the feature mismatch $R(x \mid \theta_t, \theta^*, w)$. Expressing the change in feature mismatch at each step through its gradient and breaking it down into direct, pre-cached, and shared components leads to the equality above.
The remark shows that each loss component has its own influence on feature representation at every step of gradient descent. Integrated over the whole training process, this influence accounts for the discrepancy between the feature representation at the beginning and at the end of training.

The remark above is grounded for SGD; however, the optimizer commonly used to train models is Adam \citep{kingma2014adam}. Thus, we adjust each gradient component $G$ to reflect the gradient steps made by Adam. We keep separate moments $m_{(\mathrm{component})}$ for each of the three components
and calculate the step attributed to each of the components as \[
G_{(\mathrm{component})} = - \alpha \cdot \frac{m_{(\mathrm{component})}}{1 - \beta_1^t}
\]
Then the $G_{(\mathrm{direct})} , G_{(\mathrm{pre\text{-}cached})}$ and $G_{(\mathrm{shared})}$ are an exact partition of the optimizer step, and we use them to compute influence as defined above.

We interpret the value $\widetilde{I}_\mathrm{direct}(w_i^k) \equiv \sum_t I_\mathrm{direct}(w_i^k, \theta_t)$ (\emph{integrated direct influence}) as the overall impact of the direct loss component on the emergence of the feature $w_i^k$ in the model, and similarly for the other two components.
We use this decomposition to answer which combinations of direct, shared, and pre-cached gradient signals produced a feature over the course of training, by evaluating the magnitudes of the three integrated components $\widetilde{I}_\mathrm{direct}(w_i^k)$, $\widetilde{I}_\mathrm{pre\text{-}cached}(w_i^k)$, and $\widetilde{I}_\mathrm{shared}(w_i^k)$.

\section{Experiments with small Transformers}
\label{sec:application}

In all experiments in this section, we train GPT-2-like Transformers \citep{radford2019language} for NTP using cross-entropy loss. If not stated otherwise, all our models have standard architecture: they are non-myopic and tied. To estimate if a Transformer represents a given feature linearly, we train layer- and position-specific linear probes \citep{alain2016understanding, belinkov2022probing} to predict the value of a feature using the residual stream of the trained model $\theta^*$ as input.
For consistency, we always treat probing as a regression task and evaluate the probes using Pearson correlation between the predicted and true values of a feature.
Each of the probes represents one feature direction $w_i^k$.

In the attribution experiments, we then retrain the Transformer from scratch with the same random seed and data order, repeating the training trajectory of the first run that lead to $\theta^*$. For each batch in the training set, we compute $I_\mathrm{direct}(w_i^k, \theta)$, $I_\mathrm{pre\text{-}cached}(w_i^k, \theta)$, and $I_\mathrm{shared}(w_i^k, \theta)$. We sum those values across the batches, obtaining the integrated influences for each feature\footnote{Our approach is discussed in more detail in Appendix \ref{app:influence-details}.}.

\subsection{In Toy Tasks, NTP-Useless Features Do Not Emerge Without Pre-Caching and Circuit Sharing}
\label{sec:4-toy}

\begin{figure*}[t]
\vskip -0.2in
  \centering
  \includegraphics[width=.5\textwidth]{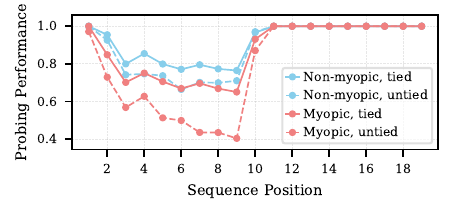}\hfill
  \includegraphics[width=.5\textwidth]{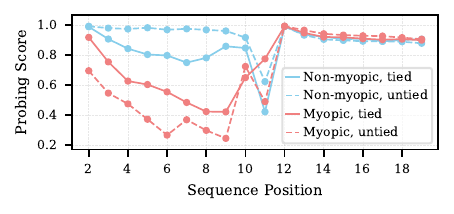}\hfill

  \caption{Performance of the linear probes applied to the residual stream after the first Transformer block. \textbf{Left:} Majority, the feature ``majority-so-far''. \textbf{Right:} Conditioned Majority, the feature ``previous-token''. In both cases, the feature is \us until the 10th token. In both cases, ablating pre-caching and circuit sharing substantially hurts the probing performance.}
  \label{fig:probing-toy}
  \vskip -0.15in
\end{figure*}

First, we verify our intuitions using two toy tasks where we have a clear understanding of the required circuits: \emph{Majority} and \emph{Conditioned Majority}.
We train two-layer models to solve each task.

In \textbf{Majority}, each example $x$ consists of $M$ tokens sampled uniformly from a vocabulary of size $V$, and $M$ tokens sampled from the set of the most frequent tokens so far: $\mathrm{argmax}_t \; \mathrm{count}(t, x_{\leqslant M})$.
The task is solved by a simple uniform attention head computing the most frequent token.
We track the influence components of the feature $F_1$ ``the most frequent token so far.''

\textbf{Conditioned Majority} is designed to bring out the importance of pre-caching. The input consists again of two parts: $M$ uniformly sampled tokens, followed by $M$ samples from the set of those tokens that followed the token ``A'' most often in the first part.
The task requires a mechanism akin to induction heads \citep{olsson2022context}: the first attention layer attends to the preceding token, and the second layer attends to the tokens after ``A.'' We study the feature $F_2$ ``the preceding token is A.''

In both tasks, the first $M$ tokens are random and can be predicted trivially, without $F_1$ or $F_2$. Thus, in the input phase $F_1$ and $F_2$ are \us, and we are interested in whether our models will learn to represent them, and if they do, why.

For different random seeds, we train (non-)myopic ($M$-un)tied models.
Thus, we obtain models that all experienced direct learning, but had pre-caching and/or circuit sharing ablated.
We plot the performance of probes trained to extract $F_1$ for Majority and $F_2$ for Conditioned Majority in Figure \ref{fig:probing-toy}. In both cases the myopic untied models show the most fragile representation of the features during the input phase, supporting our prediction that such models have no incentive to learn \us features. Lifting the ablation of pre-caching or circuit sharing improves the performance of probes.
Notably, both tied and untied myopic models are unable to learn Conditioned Majority (Appendix \ref{app:exp-toy}): the two-layer circuit described above cannot develop without pre-caching.

\subsection{Pre-Caching and Circuit Sharing Hold Together OthelloGPT's World Model}
\label{sec:4-othello}

\begin{figure*}[t]
  \centering

  \begin{subfigure}[t]{0.4\textwidth}
    \vspace{0pt}
    \centering
    \includegraphics[width=\linewidth]{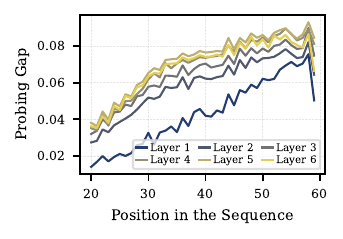}
  \end{subfigure}
  \hfill
  \begin{subfigure}[t]{0.55\textwidth}
    \vspace{2pt}
    \centering
    \setlength{\tabcolsep}{6pt}
    \renewcommand{\arraystretch}{1.15}
{\scriptsize
\begin{tabular}{llll}
\toprule
Component & NTP-useful? & [2.5\% & 97.5\%] \\
\midrule
\multirow[t]{2}{*}{direct}
 & yes & 2.85 & 12.38 \\ & no & -4.69 & 2.74 \\
\cline{1-4}
\multirow[t]{2}{*}{pre-cached}
 & yes & -1.99 & 0.66 \\ & no & 0.55 & 3.05 \\
\cline{1-4}
\multirow[t]{2}{*}{shared}
 & yes & 4.80 & 12.48 \\ & no & 2.93 & 9.91 \\
\cline{1-4}
\multirow[t]{2}{*}{combined}
 & yes & 12.14 & 19.05 \\ & no & 4.42 & 10.07 \\
\bottomrule
\end{tabular}
}
  \end{subfigure}

  \caption{\textbf{Left:} the gap in probing performance between \uf and \us board squares for OthelloGPT. \textbf{Right:} 95\% confidence intervals for the integrated influence of each component on the representation of \uf and \us features.}
  \label{fig:othello-main}
  \vskip -0.15in
\end{figure*}

Next, we apply our framework to study Transformers trained to predict legal moves in the game of Othello, a common testbed for evaluating world models implicit in language models \citep{li2022emergent, yuan2025revisiting}. In Othello, two players place their tiles on the cells of an $8 \times 8$ board in turns.
The set of legal moves is a deterministic function of the current board state.

The initial work on the topic claimed the discovery of coherent board state features in OthelloGPT \citep{li2022emergent, nanda-etal-2023-emergent}, however more recent evidence was more pessimistic, reporting that the implicit world model of OthelloGPT is fragile, especially when it comes to boards sharing the same set of legal next moves \citep{vafa2024evaluating, vafahas}. We aim to make sense of this new evidence from our perspective of NTP-use(ful/less) features.

Recall that we defined \us features as the ones that are not needed for predicting the next token. Hence, in the case of Othello, for each sequence of moves $x_{<t}$, \us features are the representations of cells that don't affect the set of the legal next moves $x_t$.
Our theory implies that, all else being equal, \us features should be represented worse than \uf features since they lack the direct learning component. However, pre-cached and shared components are active even for \us features, supplying them with some training signal.

We follow a standard experimental setting and train a Transformer on randomly generated game transcripts represented as sequences of up to 60 tokens. The $i$-th token indicates the square where the tile was placed during the $i$-th move in the game. Then we train linear probes to extract the board state from the model's hidden representations.

The experimental results align with our predictions. Figure \ref{fig:othello-main} (left) shows the gap between the performance of linear probes trained to extract \uf and \us features: it is always positive and increases with layer depth (most likely because the deepest layers align more on direct influence). We verify the reasons for this disparity by inspecting the influence of each component calculated separately for \us and \uf features (Figure \ref{fig:othello-main} (right)).
Direct influence alone is positive for \uf but indistinguishable from zero for \us features. However, since pre-cached and shared influence is positive even for \us features, they are still learned.

These observations provide new evidence that complements the empirical findings of \cite{vafahas}. Specifically, we show with statistical significance that the model’s inductive bias toward next-token partitions of state arises from the direct gradient component promoting learning of \uf features. We also refine the result of \cite{vafahas} by showing that the \us cells are learned as well, albeit less robustly, since the signal for learning them is present due to the non-zero pre-cached and shared components. Together, our results explain both 1) why the model represents \uf cells better than \us ones, and 2) why \us cells can still be recovered better than chance.

\subsection{Pre-Caching is Required for Coherent Text Generation, but not Needed for Syntax}
\label{sec:4-lm}
\begin{figure*}[t]
\vskip -0.2in
  \centering
  \begin{subfigure}[t]{0.31\textwidth}
    \centering
    \includegraphics[width=\linewidth]{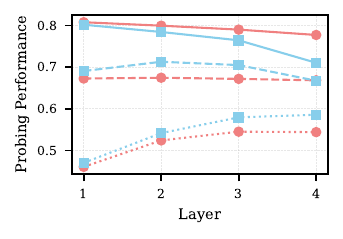}
  \end{subfigure}\hfill
  \begin{subfigure}[t]{0.31\textwidth}
    \centering
    \includegraphics[width=\linewidth]{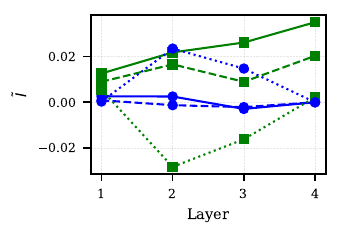}
  \end{subfigure}\hfill
  \begin{subfigure}[t]{0.37\textwidth}
    \vspace{-83pt}
    \centering
    \includegraphics[width=\linewidth]{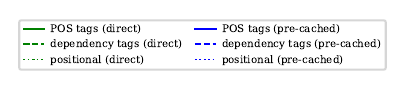}\\[1pt]
    \includegraphics[width=\linewidth]{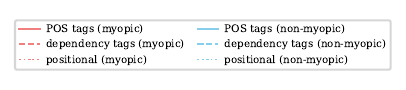}
  \end{subfigure}
  \caption{
  Comparison of the three distinct types of features in a small language model: POS tags, dependency tags, and the positional feature. \textbf{Left:} probing scores for myopic and non-myopic models. \textbf{Center:} direct and pre-cached influence components for each feature type.
  }
  \vskip -0.15in
  \label{fig:language-main}
\end{figure*}
\begin{wrapfigure}{r}{0.33\textwidth}
  \vspace{-10pt}
  \centering
  \includegraphics[width=\linewidth]{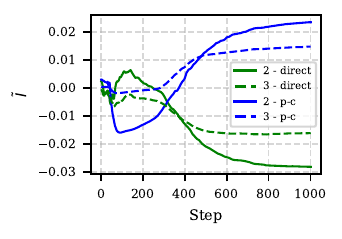}
  \caption{
    Development of the direct and pre-cached influence components of the positional feature in the non-myopic model during training.
  }
  \label{fig:lang-mini}
  \vspace{-6pt}
\end{wrapfigure}

The environment that ultimately interests us is language.
Thus, we train and study tiny Transformers for natural text generation. The main question we ask is: \emph{Is pre-caching needed for coherent text generation, and which relevant features does it affect?}

We use TinyStories v2 \citep{eldan2023tinystories}, which contains GPT-4-generated children's stories. The texts in this dataset use simple language that could be understood by a child; these properties make it learnable even by a very small model. Following \cite{eldan2023tinystories}, we train tiny (non-)myopic GPT-2-like language models on this dataset with different random seeds. For each story, we randomly choose a starting point and from there sample a substring of 64 tokens. We annotate each token with syntactic features (POS and dependency tags), as well as with a positional feature: the position of the token in the original story before subsampling the sequence. See Appendix \ref{app:details-lm} for the full experimental details.

We observe that myopic models have a consistently much higher loss than non-myopic models ($3.29 \pm 0.02$ vs $2.53 \pm 0.10$, with training curves reported in Appendix \ref{app:exp-small-lm}), indicating that pre-caching is necessary for the task. At the same time, nearly all features we study seem to be direct: the performance of probes extracting these features does not vary significantly between myopic and non-myopic models, and for all but the positional feature, pre-caching influence lies much lower than direct influence (Figure \ref{fig:language-main}) with non-overlapping confidence intervals for mean (Figure \ref{fig:language-influence-errorbars}) and one-sided Wilcoxon tests showing that the gap is statistically significant (Table \ref{table:language-wilcoxon}). We conclude that simple syntax can be learned without pre-caching.

What is pre-caching needed for then, if not syntax? It seems to be relevant for computing more complex properties of the input text: as can be seen in Figure \ref{fig:language-main} (center), in a non-myopic model the positional feature, which relates to high-level properties of the stories, starts to be learned due to the pre-caching influence. Figure \ref{fig:lang-mini} shows a transition during training, when pre-caching starts affecting the feature, hinting to a development of a circuit involving it.

\section{Investigating Features in Large Language Models}
\label{sec:sae}

\begin{figure*}[t]
\vskip -0.2in
  \centering
  \includegraphics[width=\textwidth]{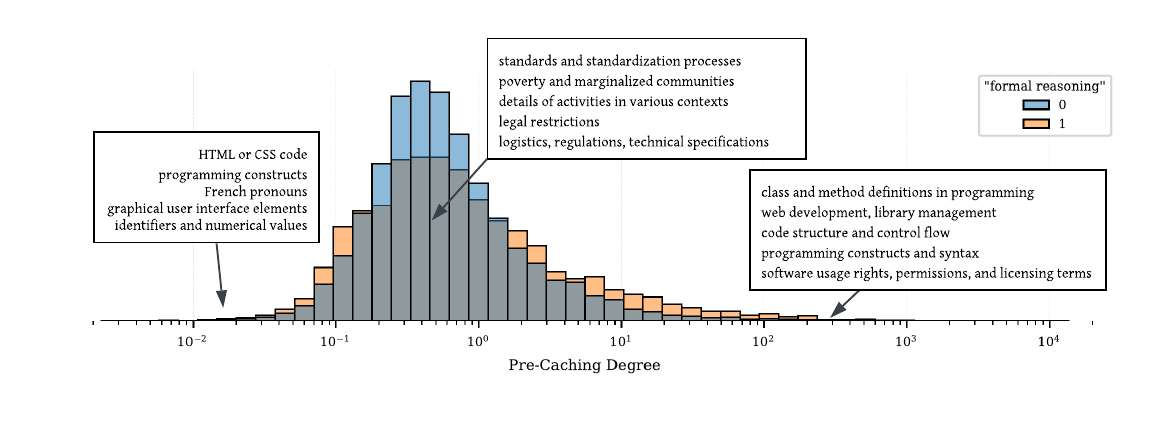}
  \vskip -0.2in
  \caption{Distribution of $Q(w)$ for the SAE features of Gemma 2 2B. The text boxes show the compressed descriptions of 5 features from the tails of the distribution, as well as 5 random features from the mode of the distribution.}
  \label{fig:sae-big}
  \vskip -0.15in
\end{figure*}

Retraining an LLM from scratch is computationally very expensive, which is why the field of LLM interpretability operates mostly on final model checkpoints. For the same reason, the attribution method that we used in Section \ref{sec:application} is inapplicable in the same form to large-scale models.

However, even with access only to the final checkpoints, we can draw some conclusions about the emergence reasons of features, connecting traditional interpretability and our framework introduced above. In this section, we use this connection to understand the latent features of a State-of-the-Art LLM: Gemma 2 \citep{team2024gemma}.

\subsection{Finding and Understanding Pre-Cached Features in an LLM}
\label{sec:sae-features}
The standard way of estimating the causal role of a feature in a Transformer is \emph{an intervention} \citep{mueller2024quest}: modifying the activations of a model during the forward pass to alter the representation of a feature and observing the changes in predictions.
Assume that one intervenes on the residual stream $r_{\theta^*, i}^k(x)$ of a trained model $T_{\theta^*}$, adding to it a vector $w$, and then records the KL-divergence between the predictions with and without the intervention at each position after $i$:
\[
d_{j}^{/i} = D_\mathrm{KL}( T_{\theta^*}(x_{j} \mid x_{< j}) \| T_{\theta^{/i}}(x_{j} \mid x_{< j}))
\]
Here, $T_{\theta^{/i}} (x_{j} \mid x_{< j})$ are the predictions of a model under the intervention on the $i$-th position.

\begin{proposition}[Approximating influence with interventions]
\label{thm:sae_proposition}
\[
    \frac{I_i^k(\theta^{/i}, x \mid w, \theta^*, \nabla_\theta L_{\mathrm{pre\text{-}cached}})}{I_i^k(\theta^{/i}, x \mid w, \theta^*, \nabla_\theta L_{\mathrm{direct}})} \approx
    \frac{\sum_{j > i + 1} d_j^{/i}}{d_{i+1}^{/i}} \triangleq Q(w)
\]
\end{proposition}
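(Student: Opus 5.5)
We work in a first-order (small-perturbation) regime and view the intervened model $T_{\theta^{/i}}$ as a perturbation of the trained model $T_{\theta^*}$ in which only the residual stream $r^k_{i}$ has been shifted by $w$. Rather than perturbing parameters, we identify this activation shift with the feature mismatch. Since $\langle w, r^k_{\theta^{/i},i}\rangle - \langle w, r^k_{\theta^*,i}\rangle = \|w\|^2$, the gradient of $R(x\mid\theta,\theta^*,w)$ with respect to the activation $r^k_i$ at $\theta^{/i}$ is $\|w\|^2 w$. Hence, by the chain rule, for any update direction $G$,
\[
I(\theta^{/i},x\mid w,\theta^*,G) = \|w\|^2\,\Big\langle w, \tfrac{\partial r^k_i}{\partial\theta} G\Big\rangle .
\]
Both the numerator and the denominator therefore share the factor $\|w\|^2$ and differ only through how $G_{\mathrm{pre\text{-}cached}}$ and $G_{\mathrm{direct}}$ move $r^k_i$ along $w$.

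\textbf{Step 1: reduce both influences to projected losses.} I would next take $G=-\nabla_\theta L_{s}$ and restrict to the part of the gradient routed through $r^k_i$, which is what the stop-gradient definitions in Proposition~\ref{thm:loss_decomposition} isolate. Then $\frac{\partial r^k_i}{\partial\theta}G$ is, to leading order, proportional to $-\frac{\partial r^k_i}{\partial\theta}\big(\frac{\partial r^k_i}{\partial\theta}\big)^{\!\top} g_s$, where $g_s=\partial L_s/\partial r^k_i$. Here $g_{\mathrm{direct}}=\partial L_{i+1}/\partial r^k_i$ and $g_{\mathrm{pre\text{-}cached}}=\sum_{j>i+1}\partial L_j/\partial r^k_i$. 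Assuming an isotropic parameter-to-activation Jacobian, i.e.\ $\frac{\partial r}{\partial\theta}\frac{\partial r}{\partial\theta}^{\top}\approx c\,\mathrm{Id}$, the ratio of influences becomes $\langle w,g_{\mathrm{pre}}\rangle/\langle w,g_{\mathrm{direct}}\rangle$. This is the ratio of directional derivatives of the future losses and of the next-token loss along $w$ at $\theta^{/i}$.

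\textbf{Step 2: relate directional loss derivatives to KL.} Evaluated at the intervened model, the loss at position $j$ measured against the original model's predictions is the cross-entropy with target $T_{\theta^*}$. Expanding $D_{\mathrm{KL}}(T_{\theta^*}\|T_{\theta^{/i}})$ along the segment from $r^*$ to $r^*+w$, its directional derivative along $w$ at the endpoint is $\langle w,\partial L_j/\partial r^k_i\rangle$, up to second-order terms. Because the KL is zero with zero gradient at the start of the segment, its value at the end is, to second order, $\tfrac12$ times that directional derivative. Summing over $j>i+1$ and comparing with $j=i+1$, the factors of $\tfrac12$ and the constants $c,\|w\|^2$ cancel, which gives $Q(w)$.

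\textbf{Main obstacle.} The hardest step is justifying the replacement of the parameter-space influence by an activation-space quantity, that is, the isotropy assumption on $\partial r/\partial\theta$ together with the identification of the true-token loss with cross-entropy against $T_{\theta^*}$. I would state these as explicit modeling assumptions, which is consistent with the ``$\approx$'' in the statement. I would also note that the quadratic approximation of KL requires $w$ to be small, and that the sign conventions ($G$ being a descent step) cancel in the ratio.
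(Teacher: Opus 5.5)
Your proposal is correct and follows essentially the same route as the paper. Both arguments write the direct and pre-cached components as $J^\top g$ through the bottleneck $r_i^k$ and pull $J^\top J$ out as a scalar; you state this explicitly as an isotropy assumption, whereas the paper does it silently. Both then convert the directional loss derivatives at the intervened point into KL divergences via the perfect-fit assumption. Your second-order treatment of that last step gives a uniform factor $\tfrac12$ that cancels in the ratio, which is slightly more accurate than the paper's first-order Taylor step equating the KL with the full directional derivative, but the conclusion is the same.
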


The proof is deferred to Appendix \ref{app:sae_motivation}.
Proposition \ref{thm:sae_proposition} implies that even if we cannot rigorously compute the influence components of a feature as we don't have access to the model's training trajectory, we can estimate the ratio of direct and pre-cached influence around the trained model using the quantity we denote $Q(w)$.
In other words, we can find out if a given feature is more likely to be direct (an intervention changes only the prediction of the immediate next token) or pre-cached.
Importantly, however, we can only make statements about the components ratio in the region around the final model, not along the whole training path.

\textbf{Finding pre-cached features in Gemma 2.} To extract the learned features in an unsupervised fashion, we use a Sparse Autoencoder from the Gemma-Scope suite \citep{lieberum2024gemma}. To compute $Q(w)$ for each feature, we find the tokens where the feature is active and ablate it by setting its activation value to zero, effectively subtracting $\langle r, w\rangle / \|w\|^2  \cdot w$ from the residual stream. Then we use the model's predictions with and without the ablation to calculate $d_j^{/i}$ and $Q(w)$ as explained above.
See the details in Appendix \ref{app:q}.

Using the fact that automatically generated descriptions of the SAE features are available, we study the descriptions of features with extreme values of $Q(w)$ and observe that most of them are related to programming or formal structure of the input text. Based on this observation, we form a hypothesis: are features related to formal reasoning tasks more likely to have extreme values of $Q(w)$?

To test this hypothesis, we label each feature as 0 or 1 depending on whether it activates on this type of inputs (details in Appendix \ref{app:feature-tagging}), and  plot $Q(w)$ separately for those groups (Figure \ref{fig:sae-big}). Indeed, the tails of the distribution see much higher concentration of these formal features. Estimated 95\% CI for the $\sigma_{\text{formal}}$ when modeled with a log-normal distribution is $1.63 \pm 0.03$, for the $\sigma_{\text{not formal}}$ it is $1.23 \pm 0.02$.
The results align with the intuitions detailed in Section \ref{sec:component-roles}: pre-caching seems to be needed in tasks that require emulating formal computational devices (e.g., AST for code parsing).

To further test the robustness of the link between $Q(w)$ and feature semantics, we examine how steering features with different values of $Q(w)$ affects samples from the model. For each feature, we draw unconditional generations from the model while steering that feature by adding its direction vector, scaled by a steering coefficient, to the residual stream at the target layer.
We find that steering features with a high pre-caching degree leads to generations containing more code and more punctuation than average (Figure \ref{fig:steering}). Interestingly, we don't observe the same effect for features on the opposite end of the spectrum. The results thus support the connection between a feature having a high value of $Q(w)$ and its involvement in formal reasoning. For features with low $Q(w)$, this connection, if present, appears weaker and not as easily detectable by steering.
Full experimental details are provided in Appendix \ref{app:steering}.

\subsection{Pre-Caching and Look-ahead Are Separate Phenomena}
\label{sec:lookahead}

\cite{wu2024language} initially introduced the notion of pre-caching as a potential explanation for the look-ahead in LLMs \citep{pal-etal-2023-future}, suggesting that pre-cached features may be the ones that contribute to the ability to predict future tokens the most. We test this hypothesis by investigating if the learned features of Gemma 2 with high $Q(w)$ are also most useful for the look-ahead.

\begin{wrapfigure}{r}{0.33\textwidth}

  \centering
  \includegraphics[width=\linewidth]{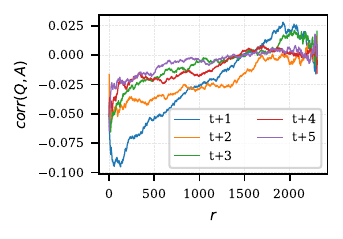}
  \caption{
    Correlations between $\cos A(W_{LA}, w, r)$ and $Q(w)$.
  }
  \label{fig:sae-lookahead}
\end{wrapfigure}

We obtain a future token predictor with look-ahead distance $k$ by training a linear layer mapping $W_{\mathrm{LA}}$ from the residual stream vector at position $i$ to the token at position $i + k + 1$ on a subset of the Pile dataset \citep{pile}:
\[
\hat{x}_{i+t+1} = h_\theta^{L+1}(W_{\mathrm{LA} } \cdot r_{\theta, i}^k(x) )
\]
Then, for each SAE feature, we compute the angle $A(W_{LA}, w, r)$ between the feature direction $w$ and the subspace formed by $r$ main singular components of $W_{\mathrm{LA}}$. If a feature $w$ is useful for predicting future tokens, it is to be expected that $W_{\mathrm{LA}}$ will learn to use it, and $\cos A(W_{LA}, w, r)$ will be non-zero.
We report the Spearman correlation between $\cos A(W_{LA}, w, r)$ and $Q(w)$ for each $r$ in Figure \ref{fig:sae-lookahead}  for the mapping up to 5 tokens ahead. See the details of the experiment in Appendix \ref{app:lookahead}.

Surprisingly, we find the negative correlation across all  look-ahead maps. This means that not only the look-ahead is not caused exclusively by the pre-cached features only, but they contribute to it less than direct features. This is strong evidence for the breadcrumbs hypothesis of \cite{wu2024language}: look-ahead in LLMs arises not from implicit planning but rather from the similarities of features needed to predict tokens at different positions.

\section{Related Work}
\label{sec:related_work}

\textbf{Effects of NTP training.} This paper is a part of a line of work aiming to understand how the NTP objective shapes the algorithms learned by Transformers. \cite{pmlr-v235-bachmann24a, nagarajan2025roll} discuss the critiques of NTP and show how it can lead to learning undesirable shortcuts or lack of creativity, mitigated by multi-token training initially proposed by \cite{gloeckle2024better}. Motivated by these findings, \cite{hu2025belief} modify the objective, \cite{thankaraj2025looking} propose reordering the tokens in the input.

\textbf{World models and future token prediction in Transformers.}
\cite{li2022emergent, karvonen2024emergent, shai2024transformers, jin2024emergent}, among others, find that Transformers implicitly reconstruct the latent variables of the data generation process, arriving to the so-called ``world models''. The learned representations, however, are not always linear \citep{engels2024not} and can be brittle and inconsistent \citep{vafa2024evaluating, vafahas}. \citet{pal-etal-2023-future} show that future token predictions can be decoded from the internal representations of a pretrained LLM with nontrivial accuracy using probes and learned prompts. \citet{jenner2024evidence} find a similar effect in a neural network trained to play chess.

Most relevant to our work, \citet{wu2024language} investigate the reasons behind the emergence of look-ahead in LLMs and proposes the \emph{pre-caching hypothesis}, stating that some tokens might ``prepare in advance'' the information relevant for future tokens. We build upon that work and borrow the term ``pre-caching'' from there. However, our contribution is distinct from that of \citet{wu2024language} in several aspects: most importantly, we bring the level of analysis down to the development of specific features, and also introduce the concept of circuit sharing, not analyzed by \cite{wu2024language}.

\textbf{Developmental interpretability.} Another relevant to us line of research is the emerging subarea of interpretability that takes inspiration from the field of training dynamics and analyzes the changes in the algorithms encoded in the models at different moments of training, asking when and why LLMs pick up different skills or features during training.
\cite{tigges2024llm} analyze circuits in LLMs at different training checkpoints and find that circuits implementing various task abilities stay consistent across model sizes and emerge at similar numbers of training steps. \cite{bayazit2025crosscoding} track the emergence of features using crosscoders trained across model checkpoints, and \cite{xu2024tracking} use SAEs for the same purpose.
\cite{wang2025differentiation} monitor emerging specialization of attention heads using the concepts from Singular Learning Theory. \cite{kangaslahti2025hidden} cluster different datapoints based on their loss dynamics, estimating the moments when certain abilities become acquired by the model.
\cite{michaud2023quantization} target explaining the LLM scaling laws by suggesting that scaling the number of parameters and training steps allows them to learn skills of increasing rarity. In a similar vein to our work, \cite{mircea-etal-2025-training} inspect the gradients during LLMs pretraining and proposes that LLMs may go through a period of ``loss deceleration'', when the per-example gradients cancel each other, slowing down the learning of new concepts.

\section{Conclusion}

Traditional work in mechanistic interpretability aims to understand a learned feature by tracing which circuits it plays a role in. However, those circuits do not simply appear in the model; they need a gradient signal to develop. In this work, we studied the sources of that gradient signal, shifting from the commonly employed static teleological perspective to a developmental one, in which learned features in Transformers are viewed as outcomes of gradient-based learning rather than a gear in a final algorithm.

In Section \ref{sec:application}, we showed how this change of perspective makes a difference, helping interpret features that cannot be explained solely through being a part of an algorithm predicting the immediate next token. We believe that Othello is a prime example, where representation of \us features \emph{can be explained} by inspecting pre-cached and shared gradient components.

Unfortunately, modifying the training pipeline of a model is often too computationally expensive to be done in practice. The best one can do in this case is inspecting the region around the trained model, which is what we did in Section \ref{sec:sae}, but even using the restricted toolkit of intervening on a static model, we can perform analysis on the development of linearly represented features under study.

Improving the efficiency of our attribution method and adapting it to be applicable to large-scale models is a potential avenue for future work. Another promising direction is to use our method to discover previously unknown interpretable features by analyzing the residual stream subspaces formed by a distinct gradient component (e.g., only by pre-cached updates but not direct updates).

\newpage

\section*{Acknowledgments}

MR thanks Aleksandra Bakalova, Yash Sarrof, Yana Veitsman, Xinting Huang, Michael Rizvi-Martel, Vaishnavh Nagarajan, Naomi Saphra, and Nicolas Flammarion for helpful discussions and feedback on the drafts of this paper. JN is supported by the Konrad Zuse School of Excellence in
Learning and Intelligent Systems (ELIZA) through the DAAD programme
Konrad Zuse Schools of Excellence in Artificial Intelligence, sponsored by the
Federal Ministry of Education and Research.

\bibliography{iclr2026_conference}
\bibliographystyle{iclr2026_conference}

\newpage
\appendix

\section{Proof of Proposition \ref{thm:loss_decomposition}}
\begin{proposition}[Restated from Proposition \ref{thm:loss_decomposition}]
    For any layer $k$ and position $i$,
    \begin{equation}
        \nabla_\theta L(x, T_\theta(x)) = \nabla_\theta L^k_{i\;\mathrm{(direct)}}\left(x, T_\theta(x) \right) + \nabla_\theta L^k_{i\;\mathrm{(pre\text{-}cached)}}\left(x, T_\theta(x) \right) + \nabla_\theta L^k_{i\;\mathrm{(shared)}}\left(x, T_\theta(x) \right)
    \end{equation}
    Where
    \begin{align}
        \nabla_\theta L^k_{i\;\mathrm{(direct)}}\left(x, T_\theta(x) \right) & = \nabla_\theta L(x_{i+1}, T_\theta(x)_i) - \nabla_\theta L\left(x_{i+1}, \left[ T_\theta(x) \right]_i^{\mathrm{sg}(i, k)}\right), \\
        \nabla_\theta L^k_{i\;\mathrm{(pre\text{-}cached)}}\left(x, T_\theta(x) \right) & = \sum_{j \ne i} \left[ \nabla_\theta L(x_{j+1}, T_\theta(x)_j) - \nabla_\theta L\left(x_{j+1}, \left[ T_\theta(x) \right]_j^{\mathrm{sg}(i, k)}\right) \right], \\
        \nabla_\theta L^k_{i\;\mathrm{(shared)}}\left(x, T_\theta(x) \right) & = \sum_{j} \nabla_\theta L\left(x_{j+1}, \left[ T_\theta(x) \right]_j^{\mathrm{sg}(i, k)}\right)
    \end{align}
\end{proposition}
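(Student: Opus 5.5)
The plan is to show that the identity is purely algebraic: a telescoping sum once the total loss is written as a sum of per-position losses. First, I would make explicit that $L(x, T_\theta(x)) = \sum_j L(x_{j+1}, T_\theta(x)_j)$. Next, I would use linearity of $\nabla_\theta$ to get
\[
\nabla_\theta L = \sum_j \nabla_\theta L(x_{j+1}, T_\theta(x)_j).
\]
The key observation is that the three components are built from only two kinds of per-position gradient terms. The first kind is the full gradient $\nabla_\theta L(x_{j+1}, T_\theta(x)_j)$. The second is the stop-gradient version $\nabla_\theta L(x_{j+1}, [T_\theta(x)]_j^{\mathrm{sg}(i,k)})$.

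The main step is to add the three displayed definitions. The direct component covers the index $j=i$ and the pre-cached component covers all $j \ne i$. Together, for every $j$, they contribute
\[
\nabla_\theta L(x_{j+1}, T_\theta(x)_j) - \nabla_\theta L(x_{j+1}, [T_\theta(x)]_j^{\mathrm{sg}(i,k)}).
\]
The shared component is exactly $\sum_j \nabla_\theta L(x_{j+1}, [T_\theta(x)]_j^{\mathrm{sg}(i,k)})$. Adding it cancels every subtracted stop-gradient term position by position, which leaves $\sum_j \nabla_\theta L(x_{j+1}, T_\theta(x)_j) = \nabla_\theta L$. All sums are finite (over $j \le n$), so rearranging them is harmless.

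For this to be meaningful, I would also state how $\mathrm{sg}(i,k)$ is interpreted. The forward values are unchanged, because the stop-gradient operator is the identity on values. On the backward pass, $r^k_{\theta,i}$ is treated as a constant. Both gradient terms are therefore well-defined derivatives of the same scalar function evaluated at $\theta$, each with respect to a different computational graph. The telescoping argument does not depend on what the stop-gradient graph actually is. If one wants to justify the interpretation in the text, a chain-rule step helps. The full gradient of $L_j$ splits into paths through $r^k_{\theta,i}$, namely $\frac{\partial L_j}{\partial r^k_i}\frac{\partial r^k_i}{\partial \theta}$, plus paths avoiding it. The stop-gradient term equals exactly the paths that avoid it. Hence the differences are the through-$r^k_i$ contributions, routed through $\hat{x}_{i+1}$ when $j=i$ and through attention when $j>i$.

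The main obstacle is the indexing mismatch between the main-text and appendix conventions. In the appendix, position $j$'s prediction targets $x_{j+1}$, and the pre-cached sum runs over $j\ne i$. Positions $j<i$ contribute zero to that sum because of causal masking: $T_\theta(x)_j$ does not depend on $r^k_{\theta,i}$, so the two gradients coincide there. I would note this so the sum matches the informal "$j>i+1$" description, but the identity itself holds regardless.
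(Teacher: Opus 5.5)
Your proposal is correct and matches the paper's proof. The paper likewise adds and subtracts $\sum_j \nabla_\theta L(x_{j+1}, [T_\theta(x)]_j^{\mathrm{sg}(i,k)})$ and regroups the per-position sums into the $j=i$ difference, the $j\ne i$ differences, and the stop-gradient sum, which is the same telescoping identity you run in reverse; your extra remarks on the meaning of $\mathrm{sg}(i,k)$ and on the $j<i$ terms vanishing under causal masking are accurate but not needed for the identity itself.
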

\begin{proof}
\begin{gather*}
     \nabla_\theta L(x, T_\theta(x)) = \nabla_\theta L\big(x, T_\theta(x)\big) + \nabla_\theta L\left(x, \left[ T_\theta(x) \right]^{\mathrm{sg}(i, k)} \right) - \nabla_\theta L\left(x, \left[ T_\theta(x) \right]^{\mathrm{sg}(i, k)} \right) = \\
     = \sum_{j=1}^{N-1} \nabla_\theta L\left(x_{j+1}, \left[ T_\theta(x) \right]_j^{\mathrm{sg}(i, k)} \right)
     + \sum_{j=1}^{N-1} \nabla_\theta L\big(x_{j+1}, T_\theta(x)_j\big)
     - \sum_{j=1}^{N-1} \nabla_\theta L\left(x_{j+1}, \left[ T_\theta(x) \right]_j^{\mathrm{sg}(i, k)} \right) = \\
     = \overbrace{\sum_{j=1}^{N-1} \nabla_\theta L\left(x_{j+1}, \left[ T_\theta(x) \right]_j^{\mathrm{sg}(i, k)} \right)}^{\nabla_\theta L^k_{i\;\mathrm{(shared)}}}
     + \overbrace{\bigg( \nabla_\theta L\big(x_{i+1}, T_\theta(x)_i\big) - \nabla_\theta L\left(x_{i+1}, \left[ T_\theta(x) \right]_i^{\mathrm{sg}(i, k)} \right) \bigg)}^{\nabla_\theta L^k_{i\;\mathrm{(direct)}}} + \\
    + \underbrace{\sum_{j \ne i} \bigg[ \nabla_\theta L\big(x_{j+1}, T_\theta(x)_j\big) - \nabla_\theta L\left(x_{j+1}, \left[ T_\theta(x) \right]_j^{\mathrm{sg}(i, k)} \right) \bigg]}_{\nabla_\theta L^k_{i\;\mathrm{(pre\text{-}cached)}}}
\end{gather*}
\end{proof}

\section{Additional Details on Attribution Experiments}
\label{app:influence-details}

\paragraph{Computing the component influence.} By Definition \ref{def:i},
\begin{gather*}
    I_i^k(\theta, x \mid w_i^k, \theta^*, G) = \frac{d}{d \varepsilon} R \left( x \mid \theta + \varepsilon G, \theta^*, w_i^k \right) \bigg|_{\varepsilon=0}
\end{gather*}

A simple application of the chain rule gives an equivalent definition:
\begin{equation}
\label{eq:influence-chain-rule}
I_i^k(\theta, x \mid w_i^k, \theta^*, G) = \left\langle
\nabla_\theta R \left( x \mid \theta, \theta^*, w_i^k \right),
G
\right\rangle
\end{equation}
Thus, computing the influence of a given gradient term can be done by taking the inner product of that gradient term and the gradient of the feature mismatch.

To compute $I_\mathrm{direct}(w_i^k, \theta)$, $I_\mathrm{pre\text{-}cached}(w_i^k, \theta)$, and $I_\mathrm{shared}(w_i^k, \theta)$ for given $i$ and $k$, we employ the following algorithm. First, we calculate $\nabla_\theta L_j$ for every $j$ in a standard way. Next, we calculate $\nabla_\theta L_j^{\mathrm{sg}(k,i)}$ in a similar manner, but during the forward pass of the model we detach the tensor corresponding to $r_{\theta, i}^k(x)$. After that, we compute the gradient decomposition terms according to the definitions in Section \ref{sec:attribution}.

The only thing left is the gradient of the feature mismatch. We apply the linear probe defined by $w_i^k$ to both $r_{\theta, i}^k(x)$ and $r_{\theta^*, i}^k(x)$ and compute the feature mismatch according to Definition \ref{def:feature_mismatch}. We run one more backpropagation to find $\nabla_\theta R$ and take its inner product with the gradient decomposition terms, obtaining the desired influence values.

\paragraph{Correction for Adam.}

As metioned in Section \ref{sec:attribution}, the naive computation of influence components described above is grounded for training with SGD (Remark \ref{remark:sgd}) but not for Adam. Indeed, in the case of Adam, the parameter update is computed differently, and a linear step toward the negative gradient no longer reflects how the model is trained. Moreover, due to the inclusion of the first and second moments, the parameter update cannot be computed as a linear sum of three separable components:
\[
\theta_{t+1} - \theta_{t} \ne g(\nabla L_{(\mathrm{direct})}) + g(\nabla L_{(\mathrm{pre\text{-}cached})}) + g(\nabla L_{(\mathrm{shared})})
\]
while for SGD it is true, with $g(x) = - \eta \cdot x$ (and that fact is the basis behind Remark \ref{remark:sgd}).

However, with two small adjustments we can make the Adam parameter update separable. First, we keep separate moments for each of the three components:
\[
m_t^{(\mathrm{direct})} = \beta_1 m_{t-1}^{(\mathrm{direct})} + (1-\beta_1) \nabla L_{(\mathrm{direct})}
\]
(similarly for the other two).

Second, we use the same adaptive learning rate for all three components:
\[
\alpha = \frac{\eta}{\sqrt{\hat{v}_t} + \epsilon},
\]
where $\hat{v}_t$ is computed normally. Then,
\[
\theta_{t+1} - \theta_{t} = g(m_t^{(\mathrm{direct})}) + g(m_t^{(\mathrm{pre\text{-}cached})}) + g(m_t^{(\mathrm{shared})}),
\]
where
\[
g(x) = - \alpha \cdot \frac{x}{1 - \beta_1^t}.
\]
This way, we can use $g(m_t^{(\mathrm{direct})})$ as an adjusted gradient for the direct component. From here, the computation is similar to the case of SGD.

\paragraph{Overall Algorithm.}

First, for a given dataset, we train a model normally. In all our experiments, we use Adam to follow community standards. After the model has been trained, we freeze it, save the hidden activations on the validation dataset, and obtain a set of linear probes to regress latent variables.

For us, these probes represent the directions $w_i^k$ needed to compute the gradient-component influence. Once we have the probes, we retrain the model from scratch, fully repeating its training trajectory, which involves starting from the same initialization as the first time, keeping the data point order the same, fixing the PyTorch random seed, etc. During this second training run, at each training batch we compute the feature mismatch and the gradient-component influence for each probe.

The second run is needed because during the first one we do not have the directions $w_i^k$ needed to compute component influences, and to estimate them we need $\theta^*$, which itself requires the training run to finish. In principle, one could save all gradient components during the first run and then retrieve them from memory when computing the influence, but that is extremely expensive in terms of the disk space needed. Thus, we perform two consecutive training runs.

\section{Proof of Proposition \ref{thm:sae_proposition}}
\label{app:sae_motivation}

\begin{proposition}[Restated from Proposition \ref{thm:sae_proposition}]
\[
    \frac{I_i^k(\theta^{/i}, x \mid w, \theta^*, \nabla_\theta L_{\mathrm{pre\text{-}cached}})}{I_i^k(\theta^{/i}, x \mid w, \theta^*, \nabla_\theta L_{\mathrm{direct}})} \approx
    \frac{\sum_{j > i + 1} d_j^{/i}}{d_{i+1}^{/i}} \triangleq Q(w)
\]
\end{proposition}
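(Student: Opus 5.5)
The plan is to treat $\theta^{/i}$ as a small perturbation of $\theta^*$ and expand every quantity in the statement to leading order in that perturbation. The intervention adds $w$ to $r^k_{\theta^*,i}(x)$. The idea is to model this as a parameter displacement $\delta = \theta^{/i} - \theta^*$ that changes the residual stream at $(k,i)$ by $w$ and leaves everything upstream unchanged; for example, one can think of it as a bias-like shift localized at that site. Then, by \eqref{eq:influence-chain-rule}, each influence is an inner product $\langle \nabla_\theta R(x\mid\theta^{/i},\theta^*,w), G\rangle$. The first step is to compute $\nabla_\theta R$ at $\theta^{/i}$. From Definition \ref{def:feature_mismatch},
\[
\nabla_\theta R = \big(\langle w, r^k_{\theta^{/i},i}\rangle - \langle w, r^k_{\theta^*,i}\rangle\big)\, J_\theta^\top w,
\qquad J_\theta = \partial r^k_{\theta,i}/\partial\theta .
\]
Here the scalar prefactor equals $\|w\|^2$ and is common to both the numerator and the denominator. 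So the ratio of influences reduces to the ratio of $\langle J_\theta^\top w, G\rangle$ for $G = \nabla_\theta L_{\mathrm{pre\text{-}cached}}$ versus $G = \nabla_\theta L_{\mathrm{direct}}$.

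The second step is to rewrite each gradient component via the chain rule through $r^k_i$. By the definitions preceding Proposition \ref{thm:loss_decomposition}, the direct component is exactly the part of $\nabla_\theta L_{i}$ (the loss for predicting $x_{i+1}$) that flows through $r^k_i$, namely $J_\theta^\top \partial L_{i}/\partial r^k_i$. Likewise, the pre-cached component is $J_\theta^\top \sum_{j} \partial L_{j}/\partial r^k_i$, where the sum runs over the later positions whose targets are $x_{j+1}$ with $j+1 > i+1$. Substituting, each influence has the form $\langle w, J_\theta J_\theta^\top g\rangle$, where $g$ is the corresponding loss gradient with respect to $r^k_i$. At this point I would make the simplifying assumption that $J_\theta J_\theta^\top$ acts approximately isotropically on the relevant directions, i.e.\ is close to a multiple of the identity; this is the crude Gauss--Newton/NTK-style step. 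Under that assumption, each influence is proportional to the directional derivative $\langle w, \partial L_j/\partial r^k_i\rangle$, summed over the relevant positions $j$.

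The third step links these directional derivatives to the KL terms $d_j^{/i}$. Evaluated at the intervened model and taking the expectation over $x_{j+1}\sim T_{\theta^{/i}}$, or over data close to $T_{\theta^*}$, the directional loss derivative along $w$ measures how much the predictive distribution at position $j$ changes when $r^k_i$ is moved along $w$. To second order,
\[
d_j^{/i} \approx \tfrac12\, w^\top F_j\, w ,
\]
where $F_j$ is the Fisher/Hessian of the position-$j$ loss with respect to $r^k_i$. Likewise, $\langle w, \partial L_j/\partial r\rangle$ evaluated at $\theta^{/i}$, displaced by $w$ from the optimum $\theta^*$ where the first-order gradient vanishes in expectation, is approximately $w^\top F_j w$. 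Hence each influence is proportional to $d_j^{/i}$ with the same constant. Summing over $j>i+1$ for the numerator and taking $j=i+1$ for the denominator gives the ratio $Q(w)$.

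I expect the main obstacle to be this last identification. The argument needs $\theta^*$ to be a stationary point, so that the gradient at $\theta^{/i}$ comes entirely from the displacement. It also needs the per-position Fisher to match the KL curvature, which holds when the model is calibrated. Finally, it needs the isotropy of $J J^\top$ from the second step. I would state these as explicit approximation assumptions rather than try to bound the error terms, since the proposition itself only claims ``$\approx$''.
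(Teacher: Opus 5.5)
Your argument is correct and is essentially the paper's proof. It uses the same chain-rule factorization of the direct and pre-cached components through $r^k_i$, the same common prefactor from $\nabla_\theta R$, and the same isotropy of the Jacobian Gram matrix, which the paper assumes implicitly when it pulls $J_\theta^\top J_\theta$ out as a scalar. It also relies on the same perfect-fit assumption to turn loss changes into KL terms. Your second-order (Fisher) expansion around the stationary point is a more careful version of the paper's first-order Taylor step: it shows $\langle w,\nabla_{r} L_j\rangle\approx 2d^{/i}_j$, and the factor $2$ cancels in the ratio.

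The one thing to fix is the alternative of averaging over $x_{j+1}\sim T_{\theta^{/i}}$. The expected gradient of the log-loss under the intervened model's own predictive distribution is identically zero, so the expectation must be taken under the data, i.e.\ $T_{\theta^*}$, which is what your stationarity argument already assumes.
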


\begin{proof}

Both direct and pre-cached gradient components consider only the gradient paths that have $r_i^k$ as a bottleneck, so by chain rule they can be expressed as
\[
 \nabla_\theta L^k_{i\;\mathrm{(direct)}}  = J_\theta [r_i^k] \cdot \nabla_{r_i^k} L_i
\]
And
\[
 \nabla_\theta L^k_{i\;\mathrm{(pre\text{-}cached)}}  = J_\theta [r_i^k] \cdot \nabla_{r_i^k} \left( \sum_{j > i} L_j \right)
\]

For convenience, from this point we will treat them together as $\nabla_\theta L_c$, $c \in \{\text{direct}, \text{pre-cached}\}$.

\begin{gather*}
    R(x \mid \theta, \theta^*, w_i^k) = \frac{1}{2} \left(
    \langle w_i^k, r_{\theta,i}^k(x) \rangle
    - \langle w_i^k, r_{\theta^*,i}^k(x) \rangle \right)^2 \\
    \nabla_\theta R(x \mid \theta, \theta^*, w_i^k) =  \left(
    \langle w_i^k, r_{\theta,i}^k(x) \rangle
    - \langle w_i^k, r_{\theta^*,i}^k(x) \rangle \right) \nabla_\theta \langle w_i^k, r_{\theta,i}^k(x) \rangle = \Delta \cdot J_\theta [ r_{\theta, i}^k ] (w_i^k)
\end{gather*}

Per \eqref{eq:influence-chain-rule},
\begin{gather}
I_i^k(\theta, x \mid w_i^k, \theta^*, \nabla_\theta L_c) = \left\langle
\nabla_\theta R \left( x \mid \theta, \theta^*, w_i^k \right),
\nabla_\theta L_c
\right\rangle = \\ =
\Delta (w_i^k)^T J_\theta [ r_{\theta, i}^k ]^T J_\theta [r_i^k] \cdot \nabla_{r_{\theta, i}^k} L_c = J_\theta [ r_{\theta, i}^k ]^T J_\theta [r_{\theta, i}^k] \langle \Delta w_i^k, \nabla_{r_{\theta, i}^k} L_c \rangle
\label{eq:influence-jacobians}
\end{gather}

Imagine we do a feature ablation: that is, we take a trained model $\theta^*$ and add a vector $\Delta w_i^k$ to $r_{\theta^*, i}^k$. We measure the loss $L_c$ that depends on the model's output. Doing a first-order approximation around $r_{\theta', i}$ ($\theta'$ are the parameters of this intervened model),
\begin{equation}
\label{eq:intervention-taylor}
    L_c' - L_c^* \approx (\Delta w_i^k)^T \nabla_{r_{\theta', i}} L_c
\end{equation}

A corollary of \eqref{eq:influence-jacobians} and \eqref{eq:intervention-taylor} is that for any model with parameters $\theta'$ such that \\$r_{\theta, i}^k = r_{i\theta^*}^k + \Delta w_i^k$, the influence of component $c$ is proportional to the increase in loss compared to the model $\theta^*$.

Assuming that that $\theta^*$ perfectly fits the data ($p(x_t \mid x_{<t}) = T_{\theta^*}(x_t \mid x_{<t})$, the difference in cross-entropy NTP losses becomes
\begin{gather*}
    L_{\mathrm{direct}}' - L_{\mathrm{direct}}^* = - \Expect_{x_t \sim T_{\theta^*}(x_t \mid x_{<t})} \log T_{\theta'}(x_t \mid x_{<t}) + \Expect_{x_t \sim T_{\theta^*}(x_t \mid x_{<t})} \log T_{\theta^*}(x_t \mid x_{<t}) = \\ = \Expect_{x_t \sim T_{\theta^*}(x_t \mid x_{<t})} \frac{\log T_{\theta^*}(x_t \mid x_{<t})}{\log T_{\theta'}(x_t \mid x_{<t})} = D_\mathrm{KL}( T_{\theta^*}(x_t \mid x_{<t}) \| T_{\theta'}(x_t \mid x_{<t}))
\end{gather*}

Similarly, for the pre-cached component, the difference in losses corresponds to the sum of KL-divergences for future positions.

\begin{equation}
    \frac{I_i^k(\theta, x \mid w_i^k, \theta^*, \nabla_\theta L_{\mathrm{direct}})}{I_i^k(\theta, x \mid w_i^k, \theta^*, \nabla_\theta L_{\mathrm{pre\text{-}cached}})} = \frac{
    D_\mathrm{KL}( T_{\theta^*}(x_{i+1} \mid x_{\leqslant i}) \| T_{\theta'}(x_{i+1} \mid x_{\leqslant i})}{
    \sum_{j > i + 1} D_\mathrm{KL}( T_{\theta^*}(x_{j+1} \mid x_{\leqslant j}) \| T_{\theta'}(x_{j+1} \mid x_{\leqslant j}))
    }
\end{equation}
Where $T_{\theta'}$ is a model where $r_i^k$ was modified along the direction $w_i^k$.

\end{proof}

\section{Additional Experimental Details}

\subsection{Toy Tasks (Section \ref{sec:4-toy})}
\label{app:details-toy}

\begin{table}[h]
\centering
\footnotesize

\caption{Hyperparameters for the experiments with the tasks of Majority and Conditioned Majority.}

\begin{minipage}[t]{0.485\linewidth}
\vspace{0pt}
\centering
\begin{tabular}{p{0.56\linewidth} p{0.36\linewidth}}
\hline
\textbf{Hyperparameter} & \textbf{Value}\\
\hline
Layers & \texttt{2}\\
Heads & \texttt{4}\\
Hidden dim & \texttt{128}\\
Feedforward dim & \texttt{512}\\
\hline
Learning Rate & \texttt{0.001}\\
Batch size & \texttt{256}\\
\hline
\end{tabular}
\end{minipage}\hfill
\begin{minipage}[t]{0.485\linewidth}
\vspace{0pt}
\centering
\begin{tabular}{p{0.56\linewidth} p{0.36\linewidth}}
\hline
\textbf{Hyperparameter} & \textbf{Value}\\
\hline
Steps & \texttt{3000}\\
Train size & \texttt{102,400}\\
Eval size & \texttt{10,240}\\
Number of seeds & \texttt{10}\\
\hline
Input phase size & \texttt{10}\\
Output phase size & \texttt{10}\\
Vocabulary size & \texttt{3}\\
\hline
\end{tabular}
\end{minipage}
\label{tab:maj-config}

\end{table}

\subsection{Othello (Section \ref{sec:4-othello})}
\label{app:details-othello}

\begin{table}[h]
\centering
\footnotesize
\caption{Hyperparameters for the Othello experiment.}
\begin{minipage}[t]{0.485\linewidth}
\vspace{0pt}
\centering
\begin{tabular}{p{0.56\linewidth} p{0.36\linewidth}}
\hline
\textbf{Hyperparameter} & \textbf{Value}\\
\hline
Layers & \texttt{6}\\
Heads & \texttt{4}\\
Hidden dim & \texttt{256}\\
Feedforward dim & \texttt{1024}\\
\hline
Learning Rate & \texttt{0.001}\\
Batch size & \texttt{1024}\\
\hline
\end{tabular}
\end{minipage}\hfill
\begin{minipage}[t]{0.485\linewidth}
\vspace{0pt}
\centering
\begin{tabular}{p{0.56\linewidth} p{0.36\linewidth}}
\hline
\textbf{Hyperparameter} & \textbf{Value}\\
\hline
Steps & \texttt{5000}\\
Train size & \texttt{5,120,000}\\
Eval size & \texttt{20,480}\\
Number of seeds & \texttt{1}\\
\hline
\end{tabular}
\end{minipage}
\label{tab:othello-config}
\end{table}

Following prior work, we generate a dataset of synthetic games of Othello. When generating a new game, we always randomly choose a move that is legal given the current board state.

We encode the game as a sequence of tokens, where the $i$-th token represents a square where a stone was placed at the $i$-th turn, irrespective of which player placed it. We also add a token \texttt{pass} for the turns when no moves are legal. The games in our datasets have a fixed length of 60 turns, and end with repetitions of \texttt{pass} if the game tree finished before that.

When training the linear probes, we encode the board state as a matrix of -1, 0, and 1. 1 represents the squares belonging to the active player, -1 represents the squares belonging to the non-active player, and 0 represents the empty squares. When a square is placed, the board state matrix gets negated (because the active player changed) and -1 gets added to one of the cells.

When estimating the influence separately for \us and \uf squares, we don't take empty cells into account. For a given board state, we consider a square \uf if flipping it (negating the corresponding cell in the board state matrix) changes the set of the legal moves. Otherwise, the square is considered \us. When estimating the influence components for the \us and \uf features, we use the same probe for both types, but compute feature mismatch separately. This way, we aggregate $R(x \mid \theta, \theta^*, w_i^k)$ disjointly for the objects in the batch where the square is \us and \uf, and proceed as usual.

After that, $\widetilde{I}_\mathrm{full}(w_i^k)_{\text{\us}}$ can be seen as the contribution of \us features into the development of the linear direction defined by $w_i^k$. If $\widetilde{I}_\mathrm{full}(w_i^k)_{\text{\us}}$ is negative or close to zero, it indicates that these features do not contribute to the linearity of the representation.

Due to the resource constraints, we cannot compute the influence components for each position and square of the board. Thus, we use 16 central squares and compute influence for every 4th position (from 4th to the 56th turn).

\subsection{Small Language Model (Section \ref{sec:4-lm})}
\label{app:details-lm}

\begin{table}[h]
\centering
\footnotesize
\caption{Hyperparameters for the small LM experiment (TinyStories).}
\begin{minipage}[t]{0.485\linewidth}
\vspace{0pt}
\centering
\begin{tabular}{p{0.56\linewidth} p{0.36\linewidth}}
\hline
\textbf{Hyperparameter} & \textbf{Value}\\
\hline
Layers & \texttt{4}\\
Heads & \texttt{4}\\
Hidden dim & \texttt{128}\\
Feedforward dim & \texttt{512}\\
\hline
Learning Rate & \texttt{0.001}\\
Batch size & \texttt{1024}\\
\hline
\end{tabular}
\end{minipage}\hfill
\begin{minipage}[t]{0.485\linewidth}
\vspace{0pt}
\centering
\begin{tabular}{p{0.56\linewidth} p{0.36\linewidth}}
\hline
\textbf{Hyperparameter} & \textbf{Value}\\
\hline
Steps & \texttt{1000}\\
Train size & \texttt{256,000}\\
Eval size & \texttt{30,000}\\
Number of seeds & \texttt{10}\\
\hline
Max length of text & \texttt{64}\\
\hline
\end{tabular}
\end{minipage}
\label{tab:language-config}
\end{table}

We employ the tokenizer used in the original work on TinyStories \citep{eldan2023tinystories}.

We test the following features:

\begin{enumerate}
    \item POS tags: we annotate POS tags using spaCy \citep{Honnibal_spaCy_Industrial-strength_Natural_2020}, one-hot encode them and use the 10 most common tags, encoded as 0 or 1, as features. Since one word can contain multiple tokens, each token gets assigned the tag of the word it belongs to.
    \item Dependency tags: same procedure, also the 10 most common tags.
    \item Positional feature: since we sample substrings of 64 tokens from the original stories to train the model, the positions of tokens in these substrings do not directly correspond to their positions in the original text. We use that original position as another feature that assumes the values from 0 (the token is the first token in the original story) to 1 (the last token).
\end{enumerate}

Thus, we have 20 binary features and 1 continuous feature. We estimate influence of those features for each position from 30 to 39.

\subsection{Calculating $Q(w)$ (Section \ref{sec:sae-features})}
\label{app:q}

To characterize the causal role of the learned features in Gemma 2 \citep{team2024gemma}, we employ a Sparse Autoencoder (SAE) from the Gemma-Scope suite
\citep{lieberum2024gemma}, specifically the SAE trained at the residual stream in layer 15 with 16k hidden features.

The first step is to curate a dataset of diverse, high-activating text examples for each feature. We select text sequences from Neuronpedia \citep{neuronpedia} where features show their highest activation. For each feature, we find the token position with the maximum activation value and extract the surrounding text, including a fixed window of 10 subsequent tokens to create an initial set of sequences. This set is then filtered for diversity based on token-level Levenshtein distance, resulting in a collection of high-activating and textually different sequences.

On these sequences, we perform an intervention at the token position of maximum activation. We intercept the residual stream activation vector, pass it through the SAE's encoder, and set the activation value of our target feature to zero. This modified set of feature activations is then passed through the SAE's decoder to create a new, ablated residual stream vector, which replaces the original one in the forward pass.

To quantify the impact of the feature, we compare the model's subsequent token predictions with and without the intervention.
We measure the change in the output probability distribution at each future position using KL-divergence. This allows us to compute a $Q(w)$ score representing the ratio of the feature's delayed to immediate influence. We average this score across different sequences. As discussed in Appendix \ref{app:sae_motivation}, a high $Q(w)$ value indicates a 'pre-cached' feature whose primary influence is on the model's future state, while a low value indicates a 'direct' feature influencing on the immediate prediction.
We filter out from the analysis the features, ablating which does not show substantial effect on any generated tokens:
\[
\frac{1}{S} \sum_{s=1}^S\sum_{j > i} D_\mathrm{KL}( T_{\theta^*}(x^{(s)}_{j+1} \mid x^{(s)}_{\leqslant j}) \| T_{\theta'}(x^{(s)}_{j+1} \mid x^{(s)}_{\leqslant j})) < 0.05
\]
After that, we are left with 14,565 features to analyze out of the initial 16,384.

\subsection{Classifying SAE Features (Section \ref{sec:sae-features})}
\label{app:feature-tagging}

To assess whether a given SAE feature relates to formal reasoning, we use its automatically generated decsription available in Neuronpedia \citep{neuronpedia}. We use the available descriptions generated using GPT-4o-mini \citep{openai_gpt4omini_2024} and classify these descriptions themselves with GPT-4.1-nano \citep{openai_gpt41_announce_2025}.

Our prompt is \texttt{Here is the description of a certain textual property: "\emph{DESCRIPTION}". Is this property related to \emph{TAG}? Respond with one word only: yes or no}.

In this prompt, \emph{DESCRIPTION} is filled with the retrieved description of the feature, and \emph{TAG} is one of ``\texttt{computer code, programming languages, or math}'' or ``\texttt{syntax or text structure}''.

In this way, for each feature we obtain the labels \texttt{is\_code} and \texttt{is\_syntax} and we consider the feature related to formal reasoning if at least one of them is 1.

\subsection{Steering SAE Features (Section \ref{sec:sae-features}}
\label{app:steering}

To steer a feature $w$ in a model, we add $w$ multiplied by a steering coefficient to the residual stream: $r_i^k \leftarrow r_i^k + \delta w$. In our experiments, we set $\delta = 10$.

For each feature, we sample 64 generations from a model with that feature steered at every token position. We sample 20 tokens unconditionally with temperature 1, thus generating samples from a distribution $T_\theta(x)$, but under steering.

For each generated sample, we count the number of punctuation characters it contains. We also classify whether the sample represents a snippet of computer code using Llama 3.1 8B \citep{grattafiori2024llama} as a classifier. This gives us, for each feature, two metrics: the number of generated code snippets (out of 64 total generations) (\texttt{\#code}) and the average number of punctuation symbols per generation (\texttt{\#punct}).

In Figure \ref{fig:steering}, we show these metrics grouped by $Q(w)$ of the corresponding feature. Additionally, similarly to Section \ref{sec:sae-features}, we estimate the 95\% CI for the $\sigma$ parameter of the distribution of $Q(w)$ when it is modeled as a log-normal distribution. We split features into two groups: those with \texttt{\#code} above the median value and those below it. We perform the same split for \texttt{\#punct}. The results, presented in Table \ref{table:steering}, indicate that $Q(w)$ has heavier tails for the groups where \texttt{\#code} (or \texttt{\#punct}) is above the median, mirroring our results in Section \ref{sec:sae-features}.

\subsection{Look-Ahead Experiment (Section \ref{sec:lookahead})}
\label{app:lookahead}

We obtain a simple future token predictor by training a linear layer mapping from $r_{\theta, i}^k(x)$ to $x_{i+t + 1}$ on a subset of the Pile dataset \cite{pile}:
\[
\hat{x}_{i+t+1} = h_\theta^{L+1}(W_{\mathrm{LA} } \cdot r_{\theta, i}^k(x) )
\]

Here $h_\theta^{L+1}$ is the frozen language modeling head of Gemma, and $t > 0$ is the distance of the look-ahead.

This approach resembles Linear Model Approximation of \cite{pal-etal-2023-future}, except we keep the language modeling objective instead of the reconstruction loss on the activations at the last layer.

After training, $W_{\mathrm{LA}}$ defines a subspace in the space of the residual stream. Note that the weights of the SAE encoder define directions in the same space: indeed, one SAE feature can be seen is a linear projection of residual stream followed by an activation function. Thus, we can estimate how much a feature contributes to the look-ahead by checking how close the feature encoder vector lies to the subspace defined by $W_{\mathrm{LA}}$.

Intuitively, if some linear direction $v$ in the residual stream of the model contains information crucial for predicting look-ahead linearly, $W_{\mathrm{LA}}$ would learn to ``read'' from this direction, and columns proportional to $v$ would appear in $W_{\mathrm{LA}}$.

To check that, we compute the angle $A(W_{LA}, w, r)$ between the feature direction $w$ and the subspace formed by $r$ main singular components of $W_{\mathrm{LA}}$. Specifically, let $W_{\mathrm{LA}} = U\Sigma V^T$ be the SVD decomposition of $W_{\mathrm{LA}}$. Let $E$ be the matrix of SAE feature directions: the weights of the SAE encoder of the size $d \times n_{\mathrm{features}}$. For each $r$ from 1 to $d$, we compute the projection matrix $P = V_rV_r^T$, feature projections $\hat{E} = PE$, and save the cosine distances between the corresponding columns in $E$ and $\hat{E}$.

In this way, for the given look-ahead distance $t$ and matrix $W_{\mathrm{LA}}^t$, we obtain a matrix $C$ of size $d \times n_{\mathrm{features}}$: $C_{rj}$ is the cosine similarity between $j$-th feature and its projection on $V_rV_r^T$. In Figures \ref{fig:sae-lookahead} and \ref{fig:lookahead-pearson} we report the correlations between $Q(w)$ and $C_{rj}$ for each $r$.

\section{Further Experimental Results}

\subsection{Toy Tasks}
\label{app:exp-toy}

\begin{table}[h]
\centering
\small
\caption{Accuracy of a trained model for the first token in the output phase. All trained models successfully solve the tasks, except for the myopic models trained on Conditioned Majority.}
\setlength{\tabcolsep}{8pt}
\renewcommand{\arraystretch}{1.15}
\begin{tabular}{lcccc}
\toprule
 & \multicolumn{2}{c}{\textbf{non-myopic}} & \multicolumn{2}{c}{\textbf{myopic}} \\
\cmidrule(lr){2-3} \cmidrule(lr){4-5}
 & tied & 10-untied & tied & 10-untied \\
\midrule
\textbf{Majority}        & $1.00 \pm 0.00$ & $1.00 \pm 0.00$ & $1.00 \pm 0.00$ & $ 0.96 \pm 0.05$ \\
\textbf{Conditioned Majority}  & $0.99 \pm 0.02$ & $1.00 \pm 0.00$ & $0.76 \pm 0.01$ & $0.76 \pm 0.09$ \\
\bottomrule
\end{tabular}
\end{table}

\begin{figure*}[h]
  \centering
  \includegraphics[width=.33\textwidth]{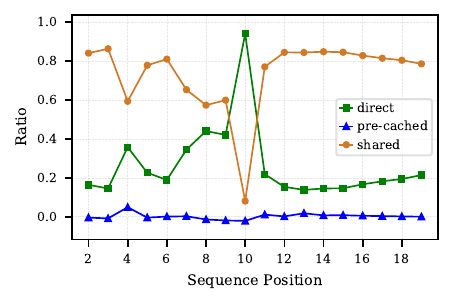}\hfill
  \includegraphics[width=.33\textwidth]{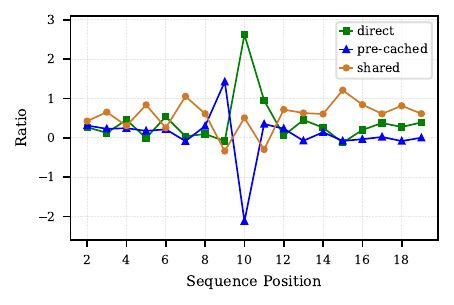}\hfill
  \includegraphics[width=.33\textwidth]{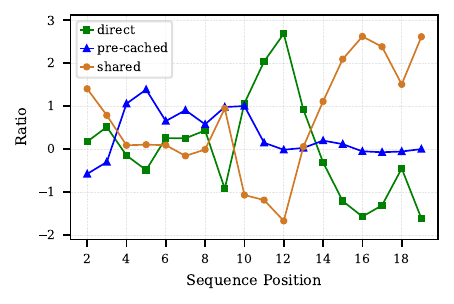}\hfill

  \caption{Normalized integrated influence components for the tasks of Majority \textbf{(left)} and Conditioned Majority (\textbf{center:} tied model, \textbf{right:} 10-untied model). \\In models solving Majority, the features in the input phase seem to be predominantly shared, but that flips at the 10th token -- exactly the point where the feature is needed to predict the immediate next token, which is the output for the task.
  \\The images for Conditioned Majority may help explain the curious increase in the performance and probing accuracy of the 10-untied model compared to the tied model. We see that for the 10-untied model it is the pre-caching influence that causes the previous-token feature to be learned in the input phase, while for the tied model there is no clear pattern. This likely indicates that, instead of the robust circuit we expected, the tied model learned a less robust, possibly shortcut solution, due to the influence of shared gradients from the output phase.
  }
  \label{fig:toy-grad-tracking}
\end{figure*}

\subsection{Othello}
\label{app:exp-othello}

\begin{figure*}[h]
  \centering
  \includegraphics[width=0.58\textwidth]{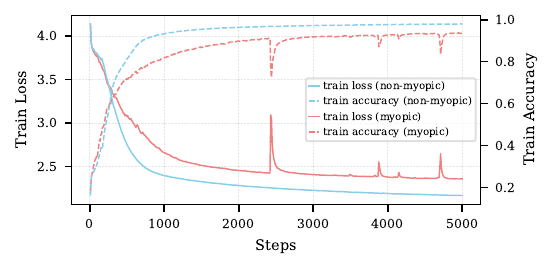}
  \caption{The training loss and accuracy curves of the myopic and non-myopic OthelloGPT models. The training of a myopic model is much less stable and it does not converge to the same performance as a non-myopic model.}
  \label{fig:app-othello-losses}
\end{figure*}

\begin{figure*}[h]
  \centering
  \includegraphics[width=0.5\textwidth]{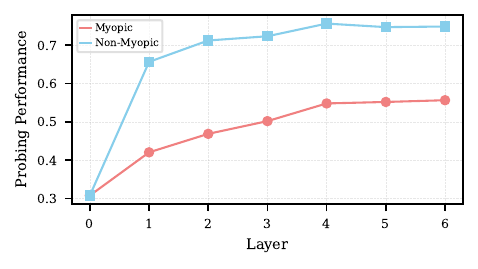}
  \caption{Performance of the probes trained to extract the board state from myopic and non-myopic Transformers for Othello. Myopic model shows a much lower degree of the board state representation. \\
  The scores presented in this plot are lower than the ones reported in the prior literature \citep{nanda-etal-2023-emergent} due to the differences in evaluation methodology (we train the probes for regression and evaluate them using Pearson correlation, also unlike the prior work we include empty cells into evaluation). In addition, our model is smaller than the one used in the prior work (6 layers against 8 and a smaller hidden dimensionality), dictated by the limits of computational resources.}
  \label{fig:app-othello-probing}
\end{figure*}

In Figures \ref{fig:othello-gt-inner} and \ref{fig:othello-gt-outer}, we compare the direct and pre-cached influence components for different squares in the Othello experiment. Each plot is an average of the results for two close positions.

Interestingly, we find the results to be quite noisy and clearly non-stationary with respect to the position in the sequence. In general, it seems that the importance of the direct component generally goes down later in the game relative to the importance of the pre-cached component. However, we believe that more experiments are needed to make conclusive statements.

\begin{figure*}[h]
  \centering
  \includegraphics[width=0.8\textwidth]{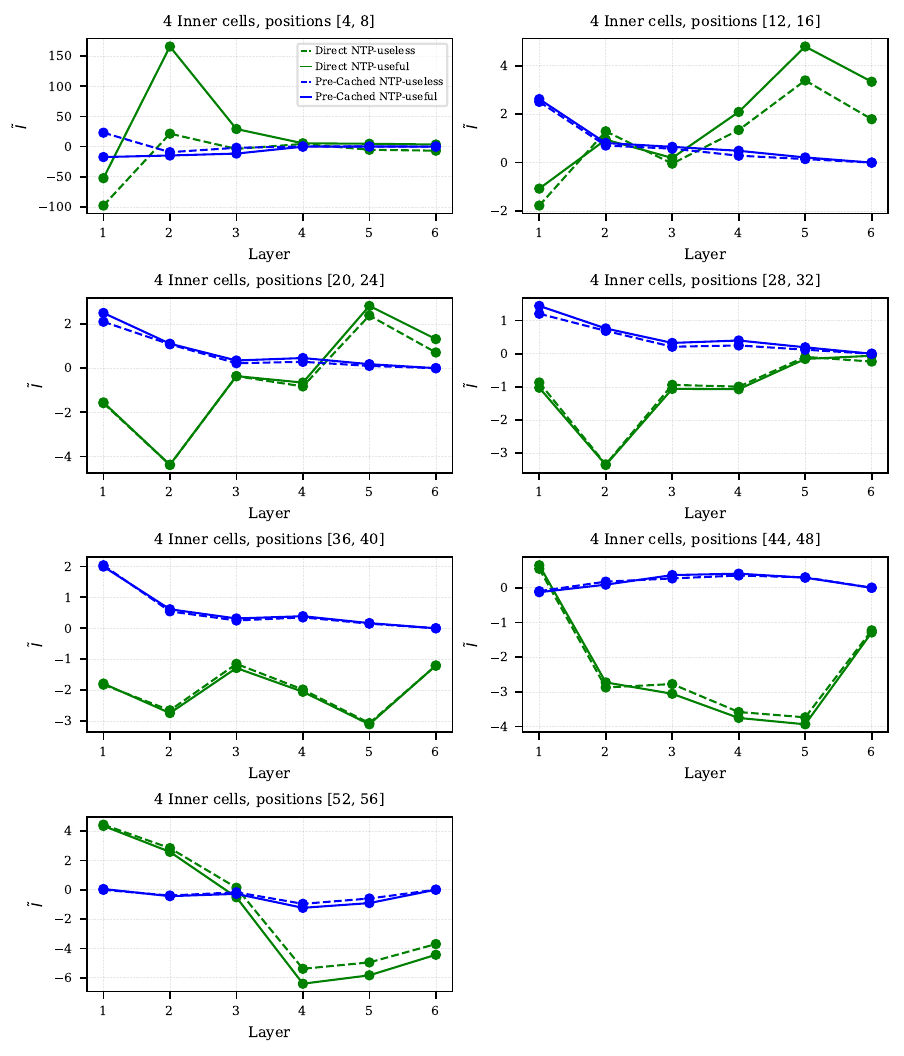}
  \caption{Integrated influence components at different positions for the 4 central cells in Othello. Each plot is an averaging of two positions.}
  \label{fig:othello-gt-inner}
\end{figure*}

\begin{figure*}[h]
  \centering
  \includegraphics[width=0.8\textwidth]{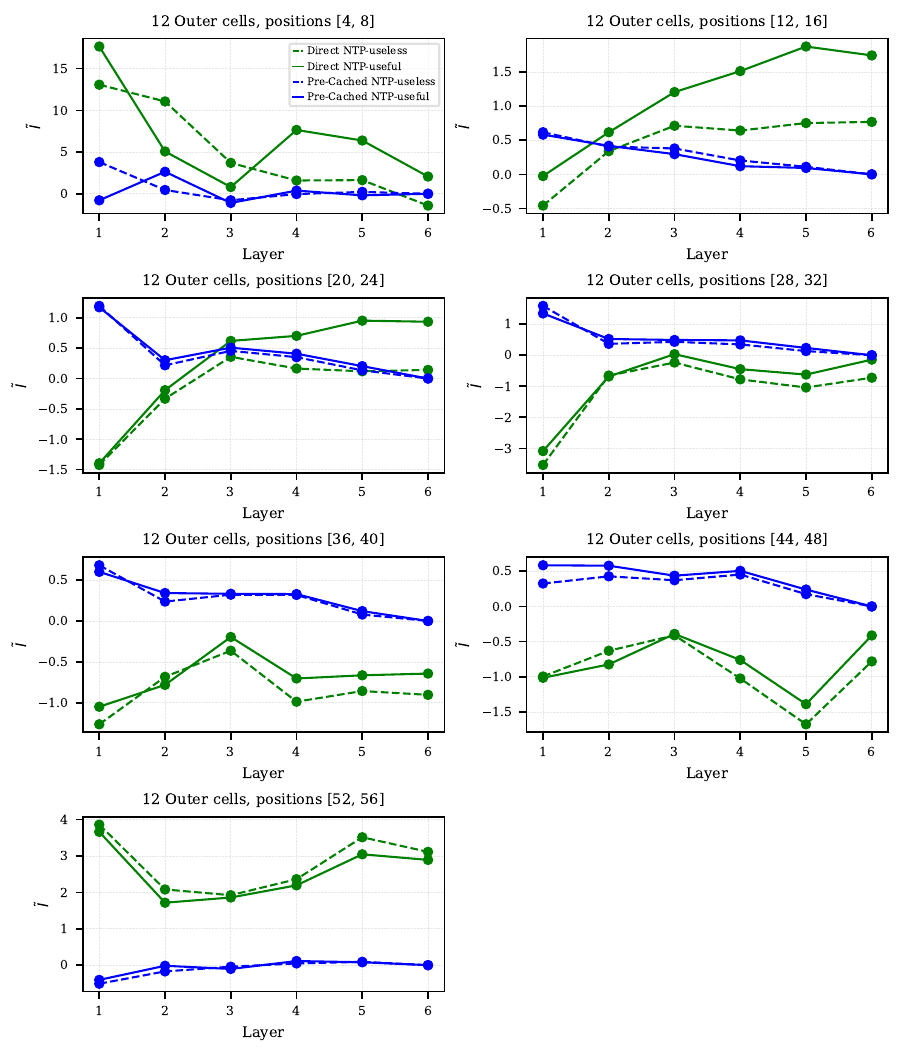}
  \caption{Integrated influence components at different positions for the 12 cells surrounding the center in Othello. Each plot is an averaging of two positions.}
  \label{fig:othello-gt-outer}
\end{figure*}

\subsection{Small Language Model}
\label{app:exp-small-lm}

\begin{figure*}[h]
  \centering
  \includegraphics[width=0.5\textwidth]{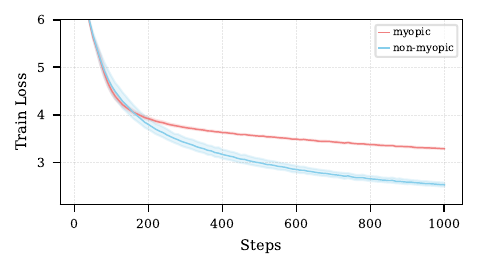}
  \caption{The training loss curves of the myopic and non-myopic small language models. It can be seen that a myopic LM has a consistently much higher loss from very early in training.}
  \label{fig:app-lm-losses}
\end{figure*}

\begin{figure*}[h]
  \centering
  \begin{subfigure}[b]{0.32\textwidth}
    \centering
    \includegraphics[width=\linewidth]{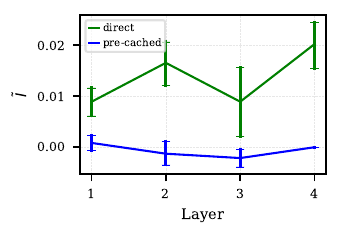}
    \caption{POS tags}
    \label{fig:language-influence-pos-errorbars}
  \end{subfigure}\hfill
  \begin{subfigure}[b]{0.32\textwidth}
    \centering
    \includegraphics[width=\linewidth]{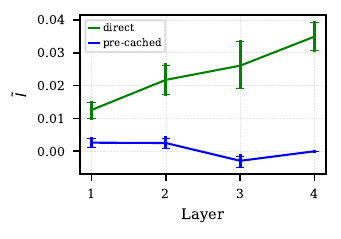}
    \caption{Dependency tags}
    \label{fig:language-influence-dep-errorbars}
  \end{subfigure}\hfill
  \begin{subfigure}[b]{0.32\textwidth}
    \centering
    \includegraphics[width=\linewidth]{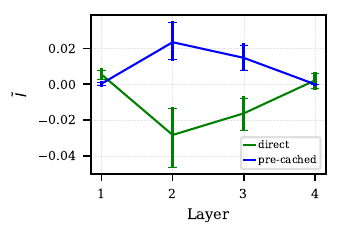}
    \caption{Positional feature}
    \label{fig:language-influence-sem-errorbars}
  \end{subfigure}

  \caption{Direct and pre-cached influence for different feature types (as in Figure \ref{fig:language-main}) with 95\% CI estimates for the average value at each layer. }
  \label{fig:language-influence-errorbars}
\end{figure*}

\begin{table}[ht]
\centering
\begin{tabular}{llrrr}
\hline
Feature category & Layer & $t$-statistic & $p_{\text{greater}}$ & $p_{\text{less}}$ \\
\hline
\noalign{\vskip 0.5ex}
\multirow{4}{*}{POS tags}
  & 1 & 358640 & $5.8\times10^{-8}$   & $1.0$ \\
  & 2 & 387970 & $3.1\times10^{-16}$  & $1.0$ \\
  & 3 & 374375 & $5.5\times10^{-12}$  & $1.0$ \\
  & 4 & 452676 & $3.3\times10^{-46}$  & $1.0$ \\[0.5ex]
\hline
\noalign{\vskip 0.5ex}
\multirow{4}{*}{Dependency tags}
  & 1 & 269035 & $9.4\times10^{-18}$  & $1.0$ \\
  & 2 & 297604 & $2.5\times10^{-34}$  & $1.0$ \\
  & 3 & 301802 & $2.9\times10^{-37}$  & $1.0$ \\
  & 4 & 350830 & $1.1\times10^{-80}$  & $1.0$ \\[0.5ex]
\hline
\noalign{\vskip 0.5ex}
\multirow{4}{*}{Positional}
  & 1 & 4080   & $4.5\times10^{-8}$   & $1.0$ \\
  & 2 & 1033   & $1.0$                & $1.4\times10^{-7}$ \\
  & 3 & 1289   & $1.0$                & $1.1\times10^{-5}$ \\
  & 4 & 3007   & $4.9\times10^{-2}$   & $0.95$ \\
\hline
\end{tabular}
\caption{One-sided Wilcoxon test statistics comparing direct and pre-cached influence components, with p-values reported for each feature type and layer. In all layers, the direct influence for POS and dependency tags is significantly larger than the pre-cached influence, which is reversed in the middle layers for the positional feature.}
\label{table:language-wilcoxon}
\end{table}

\subsection{Gemma 2}
\label{app:exp-sae}

\begin{figure*}[h]
  \centering
  \includegraphics[width=1.0\textwidth]{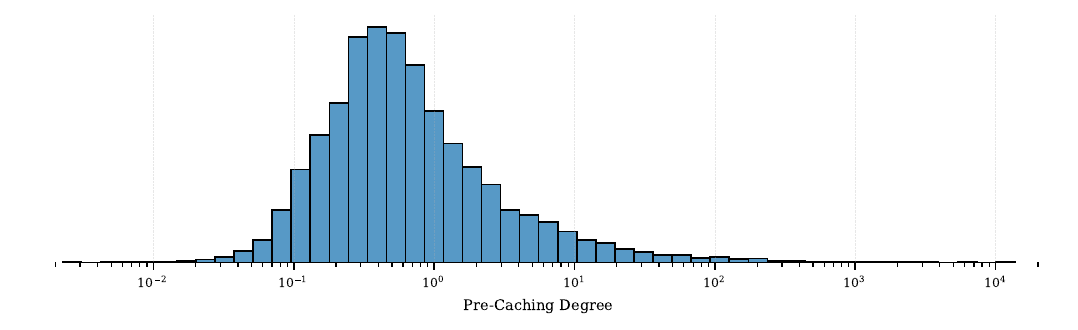}\hfill
  \caption{Distribution of $Q(w)$ (pre-caching degree) for the SAE features of Gemma 2 2B.}
\end{figure*}

\begin{figure*}[h]
  \centering
  \includegraphics[width=.33\textwidth]{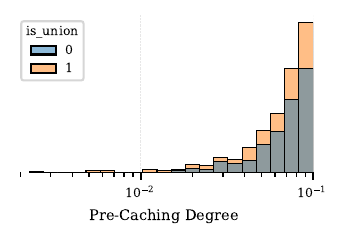}\hfill
  \includegraphics[width=.33\textwidth]{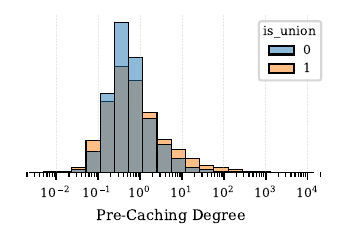}\hfill
  \includegraphics[width=.33\textwidth]{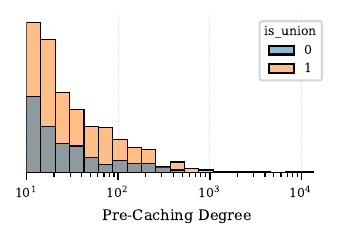}\hfill
  \caption{Distribution of $Q(w)$ (pre-caching degree) for the SAE features of Gemma 2 2B. The features classified as related to code, math, syntax, or text structure, are plotted orange; others are plotted blue. Images to the left and to the right show the tails of the distribution.}
\end{figure*}

\begin{figure*}[h]
  \centering
  \includegraphics[width=.33\textwidth]{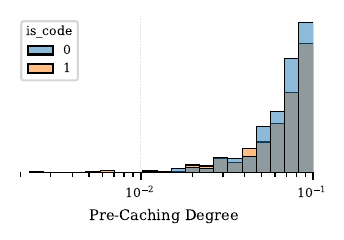}\hfill
  \includegraphics[width=.33\textwidth]{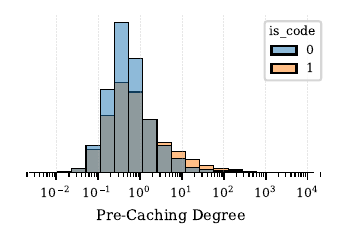}\hfill
  \includegraphics[width=.33\textwidth]{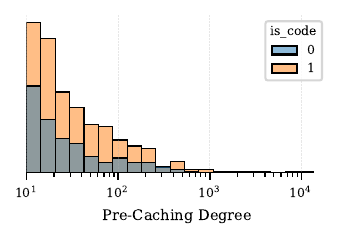}\hfill
  \caption{Distribution of $Q(w)$ (pre-caching degree) for the SAE features of Gemma 2 2B. The features classified as related to code or math are plotted orange; others are plotted blue. Images to the left and to the right show the tails of the distribution.}
\end{figure*}

\begin{figure*}[h]
  \centering
  \includegraphics[width=.33\textwidth]{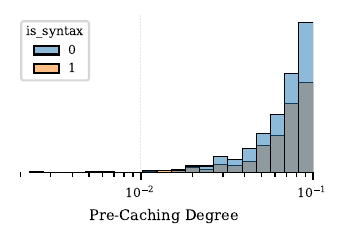}\hfill
  \includegraphics[width=.33\textwidth]{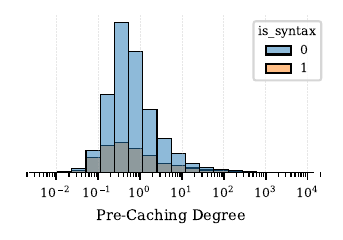}\hfill
  \includegraphics[width=.33\textwidth]{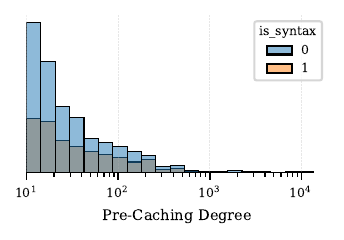}\hfill
  \caption{Distribution of $Q(w)$ (pre-caching degree) for the SAE features of Gemma 2 2B. The features classified as related to syntax or text structure, are plotted orange; others are plotted blue. Images to the left and to the right show the tails of the distribution.}
\end{figure*}

\begin{figure*}[h]
  \centering
  \includegraphics[width=.5\textwidth]{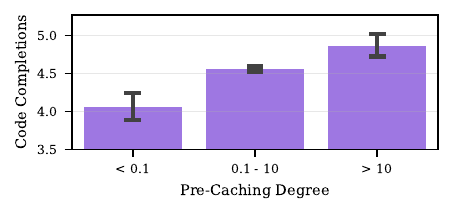}\hfill
  \includegraphics[width=.5\textwidth]{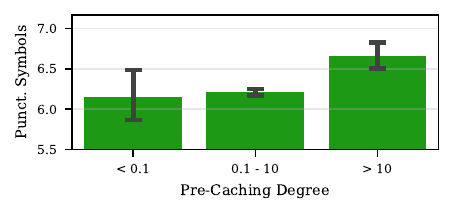}\hfill
  \caption{Number of unconditional generations classified as code (left) and average number of punctuation symbols (right) obtained by steering SAE features in Gemma 2 2B. Steering features with high pre-caching degree leads to both more code and more punctuation symbols being generated.}
  \label{fig:steering}
\end{figure*}

\begin{table}[t]
    \centering
    \begin{tabular}{lc}
        \toprule
        Group & $\hat{\sigma}$ (95\% CI) \\
        \midrule
        \texttt{\#code} above median  & $1.479 \pm 0.030$ \\
        \texttt{\#code} below median  & $1.428 \pm 0.020$ \\
        \texttt{\#punct} above median & $1.519 \pm 0.025$ \\
        \texttt{\#punct} below median & $1.351 \pm 0.022$ \\
        \bottomrule
    \end{tabular}
    \caption{Estimated scale parameter $\sigma$ of the log-normal distribution fitted to $Q(w)$ for different feature groups. For each group, we report the estimated $\hat{\sigma}$ and its 95\% confidence interval. Features are split at the median value of \texttt{\#code} and \texttt{\#punct}.}
    \label{table:steering}
\end{table}

\begin{table}[h]
\centering
\small

\begin{tabular}{c c >{\raggedright\arraybackslash}p{0.8\linewidth}}
\toprule
Feature & $Q(w)$ & Description \\
\midrule
15050 & 0.002 &  HTML or CSS code referencing scripts and stylesheets \\
3016 & 0.006 & programming constructs related to annotations and metadata in code \\
3025 & 0.006 & French pronouns and their conjugations in various contexts \\
4840 & 0.006 &  aspects related to graphical user interface elements \\
2127 & 0.007 &  identifiers and numerical values in a structured format \\
9124 & 0.010 &  elements related to data structures and operations in programming contexts \\
11737 & 0.011 & the article "An" used in various contexts \\
4214 & 0.011 &  numerical values or symbols, particularly those frequently associated with coding, data structures, or software libraries \\
11655 & 0.013 & articles and determiners preceding nouns \\
13070 & 0.014 & compiler directives and warning pragmas in code \\
\bottomrule
\end{tabular}
\caption{10 features with the lowest $Q(w)$ among the SAE features of Gemma 2 2B under study.}
\end{table}

\begin{table}[h]
\centering
\small

\begin{tabular}{c c >{\raggedright\arraybackslash}p{0.8\linewidth}}
\toprule
Feature & $Q(w)$ & Description \\
\midrule
4592 & 13710.8 &  programming-related terms and structures, particularly those associated with class and method definitions \\
12285 & 6946.1 &  references to web development, particularly relating to dependencies and library management \\
6579 & 3308.5 &  code structure and control flow statements \\
15090 & 2675.1 &  programming constructs and syntax elements \\
10042 & 1853.5 &  text related to software usage rights, permissions, and licensing terms \\
13045 & 1641.9 & elements and objects that are part of a programming interface or user interface \\
15829 & 1202.1 & function calls and their syntax within code \\
6139 & 898.5 & technical references to Forms and related components in programming \\
13552 & 871.2 &  terms related to networking and software development \\
2162 & 756.9 & colons or punctuation marks \\
\bottomrule
\end{tabular}
\caption{10 features with the highest $Q(w)$ among the SAE features of Gemma 2 2B under study.}
\end{table}

\begin{table}[h]
\centering
\small

\begin{tabular}{c c >{\raggedright\arraybackslash}p{0.8\linewidth}}
\toprule
Feature & $Q(w)$ & Description \\
\midrule
9964 & 0.125 &  terms related to standards and standardization processes \\
14622 & 0.663 & terms related to poverty and marginalized communities \\
5029 & 0.928 & details related to activities undertaken or actions witnessed in various contexts \\
136 & 0.852 & terms associated with legal prohibitions and restrictions \\
4010 & 0.435 & concepts related to logistics, regulations, and technical specifications \\
13176 & 0.227 &  references to events or gatherings \\
2463 & 0.330 & references to assembly attributes in a programming context \\
4685 & 0.109 &  tokens related to identifiers or types in programming languages \\
2803 & 0.171 & references to processes related to healthcare, particularly in the context of treatment and intervention strategies \\
9598 & 0.560 & legal and political events or controversies \\
\bottomrule
\end{tabular}
\caption{10 random features with $Q(w)$ between 0.1 and 1.1.}
\end{table}

\begin{figure*}[h]
  \centering
  \includegraphics[width=.5\textwidth]{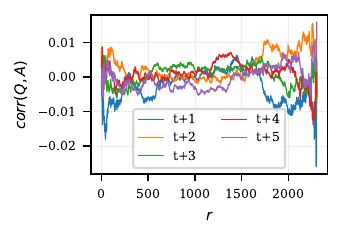}\hfill
  \caption{Pearson correlations between $\cos A(W_{LA}, w, r)$ and $Q(w)$ among the SAE features in Gemma 2.}
  \label{fig:lookahead-pearson}
\end{figure*}

\end{document}

%% file: math_commands.tex
\usepackage{amsmath,amsfonts,bm}

\def\eqref#1{equation~\ref{#1}}

\def\1{\bm{1}}

\DeclareMathAlphabet{\mathsfit}{\encodingdefault}{\sfdefault}{m}{sl}
\SetMathAlphabet{\mathsfit}{bold}{\encodingdefault}{\sfdefault}{bx}{n}

